\journal{Journal}
\newcommand{\ra}[1]{\renewcommand{\arraystretch}{#1}}
\newcommand{\LTER}{\mathbf{LTER}}
\newcommand{\LTERPC}{\mathbf{LGT}}
\newcommand{\LGT}{\mathbf{LGT}}
\newcommand{\LTERdesired}{{\mathit{LTER}}^{\star}}
\newcommand{\LTERPCdesired}{{\mathit{LGT}}^{\star}}
\newcommand{\LGTdesired}{{\mathit{LGT}}^{\star}}
\newcommand{\Env}{\mathcal{E}}
\newcommand{\States}{\mathcal{S}}
\newcommand{\Stts}{\mathcal{S}}
\newcommand{\Acts}{\mathcal{A}}
\newcommand{\Actions}{\mathcal{A}}
\newcommand{\Obss}{\mathcal{O}}
\newcommand{\Observations}{\mathcal{O}}
\newcommand{\Goals}{\mathcal{G}}
\newcommand{\Prob}{\mathcal{P}}
\newcommand{\GenProb}{\overline{\mathcal{P}}}
\newcommand{\Xgood}{{\mathrm{good}}}
\newcommand{\Xloop}{{\mathrm{loop}}}
\newcommand{\Xgoal}{{\mathrm{goal}}}
\newcommand{\Xcurrent}{{\mathrm{curr}}}
\newcommand{\Xcurr}{{\mathrm{curr}}}
\newcommand{\Xfail}{{\mathrm{fail}}}
\newcommand{\Xnoter}{{\mathrm{noter}}}
\newcommand{\Xstop}{\text{\texttt{stop}}}
\newcommand{\Xswin}{s_{\mathrm{win}}}
\newcommand{\Xsfail}{s_{\mathrm{fail}}}
\newcommand{\XSP}{\mathit{SP}}
\newcommand{\eg}{e.g., }
\newcommand{\Pandor}{\textsc{Pandor}\ }
\newtheorem{lemma}{Lemma}
\newtheorem{theorem}{Theorem}
\theoremstyle{definition}  
\newtheorem{definition}{Definition}
\newtheorem{problem}{Problem}
\theoremstyle{definition}  
\newtheorem*{notation}{Notation}
\theoremstyle{remark}  
\newcommand{\Xend}{\mathit{end}}
\newcommand{\Xlen}{\mathit{len}}
\newcommand{\given}{\ |\ }
\newcommand{\parm}{\mathord{\color{black!50}\bullet}}
\tikzset{every picture/.style={line width=0.5pt}}
\tikzset{>=stealth}
\def\XcrossedEdgeFromParent#1#2{
  [style=edge from parent,#1]
  (\tikzparentnode\tikzparentanchor) to #2 node [sloped] {\textsf{x}} (\tikzchildnode\tikzchildanchor)
}
\tikzset{and/.style={draw,circle,fill=black,minimum size=6pt,inner sep=0pt,solid}}
\tikzset{or/.style={draw,circle,fill=white,minimum size=8pt,inner sep=0pt,solid}}
\tikzset{failed node/.style={label=below:{\Lightning}}}
\tikzset{failed child/.style={edge from parent macro=\XcrossedEdgeFromParent}}
\tikzset{unexplored/.style={dashed}}
\tikzset{active/.style={red,very thick}}
\tikzset{myautomata/.style={shorten >=1pt}} 
\tikzset{initial text={}}  
\tikzset{mysplit/.style={rectangle split,rectangle split parts=#1,rectangle split horizontal,draw,rectangle split part align=base,anchor=base,draw=gray}}
\newcommand{\interfuncstrut}{\rule{0pt}{3ex}}
\begin{document}
 \begin{frontmatter}

		 \title{A Correctness Result for Synthesizing Plans With  Loops  in  Stochastic Domains}
	 
	 		\cortext[cor]{Corresponding author.}

	 		 \address[edinburgh]{School of Informatics, University of Edinburgh,
	 		Edinburgh, UK.}
	 		 \address[turing]{Alan Turing Institute,
	 		London, UK.}

	 \author[edinburgh]{Laszlo Treszkai}\ead{laszlo.treszkai@gmail.com}
	  		\author[edinburgh,turing]{Vaishak Belle\corref{cor}}
	 		\ead{vaishak@ed.ac.uk}

\begin{abstract}
Finite-state controllers (FSCs), such as plans with loops, are powerful and compact representations of action selection widely used in robotics, video games and logistics. There has been steady progress on synthesizing FSCs in deterministic environments, but the algorithmic machinery needed for lifting such techniques to stochastic environments is not yet fully understood.
While the derivation of FSCs has received some attention in the context of discounted expected reward measures, they are often solved approximately and/or without correctness guarantees. In essence, that makes it difficult to analyze fundamental concerns such as: do all paths terminate, and do the majority of paths reach a goal state?

In this paper, we present new theoretical results on a generic technique for synthesizing FSCs in stochastic environments, allowing for highly granular specifications on termination and goal satisfaction.
\end{abstract}

\begin{keyword}
	Plan and program synthesis \sep
	Stochastic domains \sep
	Loops in plans and programs \sep
	Stochastic algorithms \sep 
	Planning in robotics 
\end{keyword}

\end{frontmatter}

\section{Introduction} %
\label{sec:introduction}

Finite-state controllers (FSCs), such as plans with loops, are powerful and compact representations of action selection widely used in robotics, video games and logistics. In AI, FSCs are much sought after for automated planning paradigms such as \emph{generalized planning}, as in Figure~\ref{fig:hall-a-one}, where one attempts to synthesize a controller that works in multiple initial states. 
Such controllers are usually hand-written by domain experts, which is problematic when expert knowledge is either unavailable  or unreliable. 
To that end, the automated synthesis of FSCs has received considerable attention  in recent years, e.g., \cite{levesque2005-kplanner,Bonet2009-Automatic-derivation,Srivastava2010-Foundations-thesis,Pralet2010-constraint-based,Hu2013-and-or,Srivastava2015-Tractability}. Of course, FSCs synthesis is closely related to program synthesis \cite{levesque2005-kplanner}, and FSCs are frequently seen as program-like plans \cite{lin1998robots}, and recent synthesis literature involves an exciting exchange of technical insights between the two fields \cite{Srivastava2010-Foundations-thesis}; representative examples include the use of program synthesis to infer high-level action types \cite{schmid2000applying}, and the use of partial order planning for imperative program synthesis \cite{ireland2006combining}. 

Naturally, from an algorithmic perspective, the two most immediate questions are: in which sense are controllers \emph{correct}, and how do we \emph{synthesize} controllers that are \emph{provably correct}? In classical deterministic settings, plan paths can only be extended uniquely, so it suffices to show that there is a terminating path that reaches the goal state. Ideally, then, what we seek is a procedure that is both \emph{sound} (i.e., all synthesized controllers are correct) and \emph{complete} (i.e., if there is a plan, then the procedure finds it).

\begin{figure}[t]\centering 
    \tikz{\input{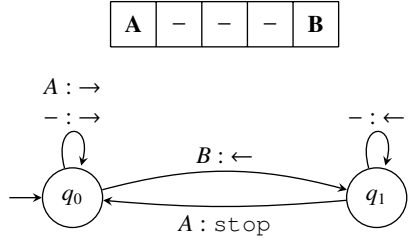}}
	\label{fig:hall-a-one}
	\caption{\emph{Above}: A planning problem where the agent is initially in cell A, and the goal is to visit cell B and go back to cell A.
	At each state, possible observations are $A$, $B$, or --.
	\emph{Below}: A correct finite state controller for this problem.
	The circles are controller states, and an edge $\smash{q\xrightarrow{\raisebox{-0.5ex}[1ex][0pt]{$\scriptstyle o:a$}}q'}$ means
	``do $a$ when the observation is $o$ in controller state~$q$, and then switch to controller state~$q'$.'' The reader may observe that this controller works for any number of states between A and B in the domain, e.g., for $(A, -, B)$ as well as $(A, -, - -, -, B)$.}
\end{figure}

The compact nature of FSCs makes them particularly attractive for mobile robots \cite{mataric2007robotics}, among other domains where there is inherent stochasticity and actions are noisy. To a first approximation, in the presence of non-probabilistic nondeterminism, it is common practice to make meta-level assumptions, such as disallowing repeated {configurations} of (state, action) pairs. However, in stochastic environments, that is almost always an unreasonable assumption. Consider a robot attempting to grip an object: the first and second attempt may fail, but perhaps the third succeeds. Structurally, the first two failures are identical: while a domain expert might find a way to distinguish the two states for the planner, from a robustness viewpoint, of course, it is more desirable when algorithms operate  without such meta-level assumptions. In this regard,  the algorithmic machinery needed for lifting FSC synthesis techniques to stochastic environments is not yet fully understood.

 More generally, we identify the following desiderata: \begin{itemize}
     \item[\textbf{D1.}]  The planner should cope with plan paths that do not terminate in a goal state;
     \item[\textbf{D2.}] The planner should correctly account for how a looping history affects goal probabilities, distinguishing loops that never terminate and loops that can be extended into goal histories; and
     \item[\textbf{D3.}] The planner should recognize when a combination of loops never terminates, even if the loops by themselves appear to be possibly terminating.
 \end{itemize}{}
Implicit in these desiderata is the idea that the planner should leverage the likelihood of action outcomes,  because \emph{(a)}~these likelihoods are informative about which action outcomes are more likely than others, and \emph{(b)}~in the presence of repeating configurations, probabilities allow for a natural tapering of the likelihood of paths.

In this paper,  we present new theoretical results  on a generic technique for synthesizing FSCs in stochastic environments, by means of a probabilistic extension of  AND-OR search. We provide a careful  analysis of how to maintain upper and lower bounds of the likelihoods of paths, so that one can naturally deal with tapering probabilities, arising from repeating configurations. In particular, it allows us to plan for highly granular specifications, such as: generate a FSC under the requirement that >80\% of the paths terminate, and >60\% of the paths reach the goal state. Most significantly, we prove that our algorithm is both \emph{sound} and \emph{complete}.


\section{Problem formalisation}

Our contributions do not depend on the details of the formal language (e.g.,  \cite{hu2011-one-dimensional}), and so we consider an abstract framework \citep{Bonet2000-incomplete-info}. 



\begin{definition}
An \emph{environment} \(\Env\) is defined as a tuple \(⟨\States, \Actions, \Observations, \Delta, \Omega⟩ \), whose elements are the following:
 $\States$, $\Actions$, $\Observations$ are finite sets of states, actions, and observations;
    $Δ:\States \times \Actions \rightarrow Π(\States)$ is a stochastic state transition function, where $Π(\States)$ denotes the set of probability distributions over $\States$;
    $Ω: \States → \Observations$ is an observation function.
\end{definition}

A planning problem is defined as an environment, an initial state, and a set of goal states:
\begin{definition}
    A \emph{planning problem} $\Prob$ is a triple $⟨\Env, s_0, \Goals⟩$, where $\Env$ is an environment with state space $\Stts$, $s_0 \in \Stts$ is the \emph{initial state}, and $\Goals \subset \Stts$ is the \emph{set of goal states}.
\end{definition}

We represent loopy plans as follows  \citep{Mealy1955-mealy-machine}:
\begin{definition}
    A \emph{finite state controller} (FSC) $C$ is defined by a tuple \(⟨Q, q_0, \Obss, \Acts, \gamma, \delta ⟩ \), where: $Q = \{q_0, q_1, \ldots, q_{N-1}\}$ is a finite set of \emph{controller states};
       $q_0 \in Q$ is the \emph{initial state} of the controller; 
     $\Obss$ \&  $\Acts$ are the sets of possible observations \& actions; 
        $\gamma: Q \times \Obss \rightarrow (\Acts \cup \{\Xstop\})$ is a partial function called the \emph{labeling function}; 
      $\delta: Q \times \Obss \rightarrow Q$ is a partial function called the \emph{transition function}.

\end{definition}

An FSC forms part of a system $⟨\Env, C⟩$, usually for a planning problem $⟨\Env, s_0, \Goals⟩$. Initially, the environment is in state $s^{(0)} = s_0$, and the FSC $C$ is in controller state $q^{(0)} = q_0$. The controller makes an observation $o^{(0)} = Ω(s_0)$, executes action $a^{(0)} = \gamma(q^{(0)},o^{(0)})$, and transitions to controller state $q^{(1)}= δ(q^{(0)},o^{(0)})$.
The environment transitions to state $s^{(1)} \sim Δ(s^{(1)} | s^{(0)}, a^{(0)})$.
This process is repeated until the special action \verb-stop- is executed, when the state$\rightarrow$observation$\rightarrow$action$\rightarrow$next-state cycle stops.


We call a pair of controller and environment state $⟨q,s⟩$ a \emph{combined state} of the system.

\begin{notation}
We denote the value of any $x \in \{s,q,o,a,p\}$ at step $t$ during the execution of a system by $x^{(t)}$. A sequence is written as $\langle x^{(t)} \rangle_{t=i}^j := \langle x^{(i)}, x^{(i+1)}, \ldots, x^{(j)}\rangle$.
The subsequence of $h = ⟨h^{(t)}⟩_t$ between indices $i$ and $j$ is denoted by $h^{(i:j)} := ⟨h^{(i)}, h^{(i+1)}, \ldots, h^{(j)}⟩$, and $h^{(:j)} := h^{(0:j)}$.
$\Xend(h)$ refers to the last element of $h$.
The concatenation of two \emph{compatible} sequences is denoted by $\cdot$, \eg $h^{(i:j)} \cdot h^{(j:k)} := h^{(i:k)}$. (Sequences $h_1$ and $h_2$ are compatible if $\Xend(h_1) = h_2^{(0)}$.)
\end{notation}


A \emph{history} of a system from a given combined state is one possible sequence of states that it follows, not necessarily until termination.
\begin{definition}
    Let $C = ⟨Q,q_0, \Obss, \Acts,γ,δ⟩$ be a finite state controller, and $\Env$ an environment.
    A \emph{history} $h = ⟨\,⟨q^{(t)}, s^{(t)}⟩\,⟩_{t=0}^T \in (Q \times \Stts)^{<ω}$ of a system $⟨\Env, C⟩$ from the combined state $⟨q^{(0)}, s^{(0)}⟩$ is a finite sequence of combined states such that $p^{(t+1)} = \Delta(s^{(t+1)} \given s^{(t)}, a^{(t)}) > 0$,
      where $a^{(t)} := γ\big(q^{(t)},Ω(s^{(t)})\big)$, and $q^{(t+1)} = δ\big(q^{(t)},Ω(s^{(t)})\big)$, for each $0 \le t < T$.
    A history $h^{(0:T)}$ for a planning problem $⟨\Env, s_0, \Goals⟩$ is \emph{terminating} if $a^{(T)} = \Xstop$.
    A terminating history for a planning problem $⟨\Env, s_0, \Goals⟩$ is a \emph{goal history} if $C$ terminates in a goal state: $a^{(T)} = \Xstop$ and $s^{(T)} \in \Goals$.
    Unless otherwise noted, the first element of a history is $⟨q_0,s_0⟩$.
\end{definition}
Although the action $a^{(t)}$ is not included explicitly in the history, it can be obtained from $⟨q^{(t)}, s^{(t)}⟩$.

The \emph{likelihood} of a history $h$ is the probability that at each step $t$, the environment responds to the controller's action $a^{(t)}$ with the next state $s^{(t+1)}$, and can be defined inductively based on the length of the history:
\begin{align*}
    \ell(h^{(:0)}) 
                := 1,\quad
    \ell(h^{(:t+1)}) 
        := \ell(h^{(:t)}) \cdot Δ(s^{(t+1)} \ |\ s^{(t)}, a^{(t)}).
\end{align*}

The most immediate question here is this: in which sense would we say that a controller is \emph{adequate} for a planning problem? In the absence of noise/nondeterminism, it is easy to show that the transition of combined states is deterministic; put differently, histories can be extended uniquely \citep{Cimatti2003-model-checking,Hu2013-and-or}. So it suffices to argue that there is a \emph{terminating history}  and that it is a \emph{goal history}. Of course, in the presence of nondeterminism, the extension of histories is no longer unique (because of nondeterministic action outcomes), and in the presence of probabilities, it is also useful to consider the likelihood of these extensions. 
We follow  \cite{belle2016-generalized-planning}, where the notion of correctness from \cite{hu2011-one-dimensional,Hu2013-and-or} is extended for noise, and define: 
\begin{definition}
    The \emph{total likelihood of termination} of the system $⟨\Env, C⟩$ on a planning problem $\Prob$ is denoted by $\LTER$:
    \begin{equation}
        \LTER := \!\!\!\!\!\!\sum_{\{h \ |\ h \text{ is a terminating history}\}}\!\!\!\!\!\! \ell(h)
    \end{equation}
\end{definition}
Analogously, for goals, we define:
\begin{definition}
    The \emph{total likelihood of goal termination} of the system $⟨\Env, C⟩$ on a planning problem $\Prob$ is denoted by $\LGT$:
    \begin{equation}
        \LGT := \!\!\!\!\!\!\sum_{\{h \ |\ h \text{ is a goal history}\}}\!\!\!\!\!\! \ell(h)
    \end{equation}
\end{definition}

Finally, we state the search problems we want to solve.
\begin{problem}\label{problem:lgt}
Given a planning problem $\Prob$, an integer $N$, $\LGTdesired \in (0,1)$, find a finite-state controller with at most $N$ states such that $\LGT ≥ \LGTdesired$ for $\Prob$.
\end{problem}
A more fine-grained version is where a minimum bound on $\LTER$ is also possible, defined below: 


\begin{problem}\label{problem:lter-and-lgt}
Given a planning problem $\Prob$, an integer $N$, $\LTERdesired \in (0,1)$, $\LGTdesired \in (0,1)$, find a finite-state controller with at most $N$ states that is $\LTER ≥ \LTERdesired$ and $\LGT ≥ \LGTdesired$ for $\Prob$.
\end{problem}

We restrict our attention to solutions for Problem~1 for the most part, and turn to Problem~2 in a penultimate technical section. 



\section{Synthesizing classical controllers}

Existing strategies for synthesizing FSCs include the compilation of  generalized planning problems to classical ones  \cite{Bonet2009-Automatic-derivation}, and the generalization of a sequential plan by abstraction \cite{Srivastava2010-Foundations-thesis}.

From the perspective of an {algorithmic schema},  the generic technique of \cite{Hu2013-and-or} is perhaps the simplest to analyze, based on AND-OR search. Here, an environment virtually identical to ours is assumed, and the transition relation is also nondeterministic (but non-probabilistic) via a state transition \emph{relation} $\Delta \subseteq \States \times \Actions \times \States$. The pseudocode for the planner is in Algorithm~\ref{alg:pseudocode-andor}.
Initially, the algorithm starts with the empty controller $C_{ε}$, at the initial controller state $q^{(0)}$, with next states in $S_0$, the initial states of a generalized planning problem.
The {\sc AND-step} function enumerates the outcomes of an action from a given combined state and history, and calls {\sc OR-step} to synthesize a controller that is correct for every outcome.
The {\sc OR-step} function enumerates the extensions of a controller for the current controller state and observation, and thus selects a next action for the current observation, and then calls {\sc AND-step} to test for correctness recursively on the outcomes of the chosen action.\footnote{We provide the pseudocode of the algorithm to easily contrast it with our algorithm, but  some details from \citep{Hu2013-and-or} are omitted for the sake of exposition.} The algorithm is essentially a blind search in controller space, reverting to the last non-deterministic choice point when a branch fails.
The search space is trimmed in two ways. First, when a controller $C$ is found to be not correct, every extension of $C'$ is dropped as well. Second, if $C \prec C'$ (meaning that every controller transition defined by $C$ is the same in $C'$),
then the histories of $C$ that were already explored are not tested again for $C'$.
Most significantly, this exhaustive search results in the  algorithm being \emph{sound} and \emph{complete} \cite{Hu2013-and-or}.




\begin{algorithm}
\begin{algorithmic}[1]
\Require $\GenProb = ⟨\Env, S_0, \Goals⟩$, a generalized planning problem;
\Statex \hspace{5ex}$N$, a bound on the number of controller states.

\Function{\interfuncstrut{}AndOr-synth}{${\GenProb, N}$}
	\State \textbf{return} \Call{AND-step$_{\GenProb, N}$}{$C_{ε}, 0, S_0, ⟨⟩$}
\EndFunction

\Function{\interfuncstrut{}AND-step$_{\GenProb, N}$}{$C,q,S',h$}
    \ForAll{$s' \in S'$}\label{algline:and-step-forall}
        \State $C \gets$ \Call{OR-step$_{\GenProb, N}$}{$C,q,s',h$}
    \EndFor
    \State \textbf{return} $C$
\EndFunction

\Function{\interfuncstrut{}OR-step$_{\GenProb, N}$}{$C,q,s,h$}
    \If{\(s ∈ \Goals\)}
        \State \textbf{return} $C$
    \ElsIf{$⟨q,s⟩ ∈ h$}\label{algline:or-step-repeated}
        \State \textbf{fail}
	\ElsIf{$q \xrightarrow{\Omega(s) / a} q' ∈ C$ for some $q', a$}
    	\State {$S' \gets \{s'\ |\ ⟨s,a,s'⟩ \in Δ \}$}
        \State \textbf{return} \Call{AND-step$_{\GenProb, N}$}{$C', q', S', h\cdot ⟨q, s⟩$}
    \Else
        \State {\bf non-det. branch} $a \in \mathcal A$ and $q' \in \{0,\ldots,N-1\}$\label{algline:or-step-nondet}
        \State $C' \gets C \cup \{q \xrightarrow{\Omega(s) / a} q' \}$
    	\State {$S' \gets \{s'\ |\ ⟨s,a,s'⟩ \in Δ \}$}
        \State \textbf{return} \Call{AND-step$_{\GenProb, N}$}{$C', q', S', h\cdot ⟨q, s⟩$}
    \EndIf
\EndFunction
\end{algorithmic}
\caption{The AND-OR search algorithm for bounded finite state controllers \citep[Fig. 4]{Hu2013-and-or}.}
\label{alg:pseudocode-andor}
\end{algorithm}


\section{Problems with loops in a noisy environment}\label{sec:problems-noisy-loops}

\begin{figure}[t]
\centering
\tikz{\input{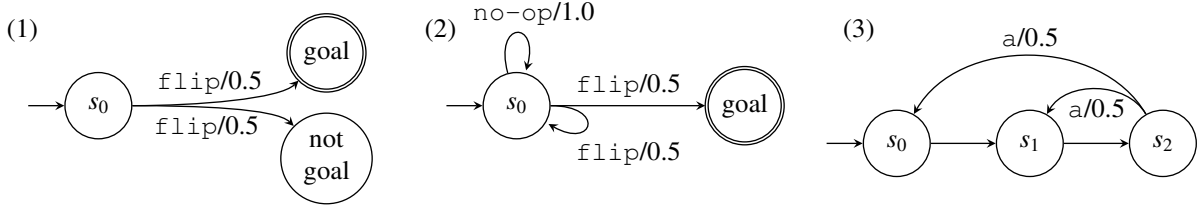}}
\caption{Test environments for probabilistic planning with loops.\quad(1)~No~controller with $\LGT=1$ exists.\quad(2)~Difference between decaying and non-decaying loops.\quad(3)~Decaying loops whose combination never terminates.}%
\label{fig:desiderata}
\end{figure}

We identified three desiderata in the introduction, which we justify below.
For \textbf{D1}, it is clear that
there exist planning problems where even the optimal controller might not terminate on every run or end up in a goal state on every terminating run. (Consider a problem with an unavoidable dead end state, for example one where one outcome of a \verb-coin_flip- action ends up in the goal, another outcome results in a dead end state. See \hbox{Fig. \ref{fig:desiderata}.1}.)

\textbf{D2} is the result of assigning probabilities to the different outcomes of an action. If a history repeats a combined state at steps $n$ and $m$, and $\ell(h^{(0:n)}) = \ell(h^{(0:m)})$, then the system will repeat the loop $h^{(n:m)}$ indefinitely, and never terminate. On the other hand, if $\ell(h^{(0:n)}) > \ell(h^{(0:m)})$, and there exists a history $h'$ from $h^{(n)}$ such that $h^{(0:n)} \cdot h'$ terminates in $\Goals$, then $h^{(0:m)} \cdot h'$ will also terminate in $\Goals$. (See \hbox{Fig. \ref{fig:desiderata}.2}.) This means that in a stochastic environment, tracking the likelihood of histories is essential.

For \textbf{D3}, in some environments no looping history has the property that $\ell(h^{(0:n)}) = \ell(h^{(0:m)})$, and yet the system has no terminating runs.
Fig.~\ref{fig:desiderata}.3 illustrates a simple example, where executing action {\tt a} in $s_2$ brings the environment \hbox{either} to $s_0$ or $s_1$, and from $s_0$ the only possible action leads deterministically to $s_1$ and from there back to $s_2$.

It is not possible to analyze every controller synthesis framework in the literature to verify its adherence to  these desiderata, but, in the very least, the case of \citep{Hu2013-and-or} (HD henceforth) is illustrative. Their procedure is correct for the dynamic environment in which they operate, but as can be inferred from the above examples, 
it easily follows that the procedure  
fails to meet the first two of the  desiderata in stochastic environments.
\begin{theorem}
  The algorithm by HD returns with failure if every controller for the planning problem has at least one history that cannot be extended into a goal history.
\end{theorem}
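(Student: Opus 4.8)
The plan is to prove the statement through a single-controller characterisation of success, used contrapositively. Concretely, I would establish: \emph{if HD returns a controller $C$, then every history of $\langle\Env,C\rangle$ extends into a goal history}. Granting this, the theorem is immediate. The hypothesis says that \emph{every} controller — in particular any $C$ that HD could return — has at least one history that \emph{cannot} be extended into a goal history; this directly contradicts the characterisation, so HD can return no controller and must report failure. That HD halts at all (with a controller or with failure) is clear, since it ranges over the finitely many controllers with at most $N$ states, and each recursion branch is cut off the moment a combined state repeats, bounding its depth by $N\,|\Stts|$.

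To prove the characterisation I would first read the shape of a successful run directly off the pseudocode. A returned $C$ corresponds to a finite AND--OR tree rooted at the call \textsc{OR-step}$(C_\varepsilon,0,s_0,\langle\rangle)$ in which no leaf is a \textbf{fail}. The three cases of \textsc{OR-step} pin down what such a tree must look like: every reachable non-goal combined state $\langle q,s\rangle$ carries a transition $q\xrightarrow{\Omega(s)/a}q'$ and hands control to an \textsc{AND-step} that recurses on \emph{all} outcomes $s'$ with $\Delta(s'\mid s,a)>0$; every leaf is a goal state $s\in\Goals$; and no root-to-leaf branch repeats a combined state, because the repeated-state test would have produced a \textbf{fail} that propagates upward and blocks $C$ from being returned. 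Hence each branch is a simple, repetition-free path through combined-state space ending in $\Goals$, and — since \textsc{AND-step} enumerates every outcome — these branches collectively exhaust every combined state reachable under $C$ from $\langle q_0,s_0\rangle$ without first passing through a goal.

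The translation to histories is then routine. Any history $h$ of $\langle\Env,C\rangle$ follows $C$'s transitions along positive-probability outcomes, so it is the label sequence of a path starting at the root that stays inside the explored tree: at each non-goal state $C$ is defined and all its outcomes are present, and a goal state has no outgoing transition, so $h$ cannot escape the tree and cannot continue past a goal. Consequently $h$ either already ends in a state of $\Goals$ or is a proper prefix of some root-to-leaf branch; in either case it extends to a branch terminating in $\Goals$, which — reading HD's goal-reaching success as a \Xstop{} issued in that goal state — is precisely a goal history. Thus $C$ admits no history failing to extend to a goal history, as required.

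I expect the delicate point to be the two semantic mismatches between HD's non-probabilistic setting and ours, together with the partiality of $C$. First, HD declares success upon \emph{reaching} $\Goals$, whereas our goal histories terminate by \Xstop{} in $\Goals$; this is bridged by the observation above, but it must be stated carefully because $C$ is a partial function that is simply left undefined at goal (and at unreached) combined states. Second, and more importantly, HD treats a repeated combined state as an outright \textbf{fail} rather than as a possibly decaying loop — and it is exactly this that makes the characterisation concern only histories unable to reach a goal \emph{at all}. The repetition-free structure of successful branches is therefore the load-bearing part: it is what guarantees that following $C$'s transitions along any outcome sequence cannot loop forever and must reach $\Goals$ in finitely many steps, so that every history is genuinely a prefix of a verified goal-reaching branch.
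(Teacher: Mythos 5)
Your proposal is correct and takes essentially the route the paper intends: the paper in fact offers no explicit proof of this theorem, justifying it only by the remark that it ``easily follows'' from the dead-end example (Fig.~\ref{fig:desiderata}.1) together with the structure of HD's search, i.e.\ that the exhaustive AND--OR procedure accepts a controller only when every branch reaches a goal state without repeating a combined state, so a history that can never be extended to reach a goal dooms every controller and the finitely many candidates are all rejected. Your contrapositive formulation---any returned controller's histories all lie inside the verified, repetition-free, goal-terminating tree---plus the termination argument is that same soundness-and-exhaustiveness reasoning spelled out in detail, and your explicit handling of the reach-goal versus stop-at-goal mismatch (which the paper silently glosses over, and which is genuinely load-bearing, since under the paper's strict \texttt{stop} semantics a controller built by HD never issues \texttt{stop} at all) makes it, if anything, more careful than the paper's own justification.
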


\begin{theorem}
  The algorithm by HD returns with failure if every controller for the  problem has at least one looping history.
\end{theorem}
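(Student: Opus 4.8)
The plan is to prove the contrapositive: if the algorithm returns a controller $C$ (i.e.\ it does not fail), then there exists a controller — namely $C$ — that has no looping history, contradicting the hypothesis. The whole argument turns on the loop-detection test at line~\ref{algline:or-step-repeated}, which is the only place a genuinely looping branch can be rejected. So the first step is to make precise that, for each candidate controller, the algorithm performs an exhaustive depth-first traversal of the reachable combined states: \textsc{AND-step} branches over every outcome $s' \in S'$ (line~\ref{algline:and-step-forall}) and \textsc{OR-step} extends the current history by $\langle q,s\rangle$, returning success only at a goal state and failing exactly when the combined state about to be expanded already occurs in the history $h$.

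Second, I would describe how failure propagates. A single \textbf{fail} inside \textsc{OR-step} backtracks to the most recent non-deterministic choice point (line~\ref{algline:or-step-nondet}); the top-level call returns failure only once every such choice has been exhausted. Hence the algorithm returns a controller iff there is \emph{some} resolution of the non-deterministic choices for which no branch of the traversal ever reaches line~\ref{algline:or-step-repeated}. Because that branch ranges over all $a \in \mathcal A$ and all $q' \in \{0,\ldots,N-1\}$, these resolutions are exactly the controllers with at most $N$ states (restricted to the reachable combined states), so this is the standard completeness property of the search over controller space.

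Third comes the key observation: a controller $C$ survives the traversal without failing iff $C$ has no looping history, and I only need the contrapositive direction. Suppose $C$ has a looping history $h$, i.e.\ $\langle q^{(i)},s^{(i)}\rangle = \langle q^{(j)},s^{(j)}\rangle$ for some $i<j$, with no goal state occurring strictly before step $j$. Since \textsc{AND-step} iterates over \emph{all} outcomes, the traversal follows the prefix $h^{(:j)}$ of this history; when it calls \textsc{OR-step} on $\langle q^{(j)},s^{(j)}\rangle$, its history argument already contains $\langle q^{(i)},s^{(i)}\rangle$, so the test at line~\ref{algline:or-step-repeated} fires and the branch fails. Combined with the propagation step, every candidate controller fails, and the algorithm returns failure — which is the claim.

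I expect the main obstacle to be the careful treatment of goal states in the definition of a looping history: line~\ref{algline:or-step-repeated} is reached only after the goal test at the top of \textsc{OR-step} has been passed, so the repeated combined state must be reachable \emph{before} any goal state is encountered for the traversal to actually detect it. I would therefore fix the convention that a looping history contains no goal state strictly before the second occurrence of the repeated combined state (a loop ``hidden behind'' a goal is cut off by the success return at that goal and is irrelevant), and check that the hypothesis ``every controller has a looping history'' is intended in this sense. A secondary, more routine point is to confirm that the loop involves only reachable combined states, so that the partial controller the search actually builds already determines it; this holds because every history starts at $\langle q_0,s_0\rangle$ and respects the transition dynamics.
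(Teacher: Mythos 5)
Your argument is correct and is precisely the reasoning the paper leaves implicit (it states this theorem without an explicit proof, as an easy consequence of the fail test at line~\ref{algline:or-step-repeated} of Algorithm~\ref{alg:pseudocode-andor} together with the exhaustiveness of the search over $N$-bounded controllers): any looping history of any candidate controller is eventually traversed by the AND-OR expansion, triggers that test, and so every non-deterministic branch fails. Your caveat about goal states is also the intended reading --- in HD's setting execution is truncated at goal states, so a ``looping history'' there necessarily repeats a combined state before reaching any goal --- and the only point you leave tacit is that the search tree is finite, so exhausting all branches indeed means the algorithm \emph{terminates} with failure rather than running forever.
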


More significantly, it is not possible to specify likelihood-based correctness criteria, which becomes essential for handling domains where  actions fail, for example, and  meta-level assumptions are unrealistic.

\begin{algorithm} 
\begin{algorithmic}[1]
\Require ${\Prob} = ⟨\Env, {s_0}, \Goals⟩$, a planning problem;
\Statex \hspace{5ex}$N$, a bound on the number of controller states;
\Statex \hspace{5ex}$\LTERPCdesired$: the desired minimum $\LTERPC$.
\Function{\interfuncstrut{}Pandor-synth}{$\Prob, N$}
	\State (global) $\bm{\alpha} \gets ⟨⟩$
	\State \textbf{return} \Call{AND-step$_{\Prob, N}$}{$C_{ε}, 0, \{⟨s_0, 1.0⟩\}, ⟨⟩$}\label{algline:call-and}
\EndFunction
\Function{\interfuncstrut{}AND-step$_{\Prob, N}$}{$C,q,\XSP',h^{(0:n)}$}
	\State $α^{(n+1)}_x \gets 0$, \quad for $x \in \{\Xgoal, \Xfail, \Xnoter\}$\label{algline:alpha-goal-gets-zero}
	\State $α^{(n+1,:)}_{\Xloop} \gets 0$; \quad $α^{(:,n+1)}_{\Xloop} \gets 0$\label{algline:alpha-loop-gets-zero}
    \ForAll{${⟨s',p'⟩} \in \XSP'$}
        \State $C \gets$ \Call{OR-step$_{\Prob, N}$}{$C,q,s',p',h$}
        \State $\bm{λ} \gets$ \Call{CalcLambda}{$h,\bm{\alpha}$}
		\If{$λ_{\Xgoal} \ge \LTERPCdesired$}\label{algline:and-goal-condition}
			\State \textbf{return} $C$
		\ElsIf{$1 - λ_{\Xfail} - λ_{\Xnoter} < \LTERPCdesired$}
			\State \textbf{fail} this non-deterministic branch
		\EndIf\label{algline:and-goal-condition-end}
    \EndFor
    \State $\bm{\alpha} \gets$ \Call{CumulateAlpha}{$h, \bm{\alpha}$}\label{algline:and-step-call-cumulate}
    \State \textbf{return} $C$
\EndFunction

\Function{\interfuncstrut{}OR-step$_{\Prob, N}$}{$C,q,s{,p},h^{(0:n)}$}
    \If{{$s = \Xswin$}}
		\State $α^{(n+1)}_{\Xgoal} \gets α^{(n+1)}_{\Xgoal} + p$;\quad{\bf return} $C$
    \ElsIf{{$s = \Xsfail$}}
		\State $α^{(n+1)}_{\Xfail} \gets α^{(n+1)}_{\Xfail} + p$;\quad{\bf return} $C$
	\ElsIf{$q^{(k)} = q$ and $s^{(k)} = s$ for some $k$}
	    \If{$p = 1$ and $p^{(i)} = 1$ for all $k+1 \le i \le n$}
	        \State $α^{(n+1)}_{\Xnoter} \gets 1$;\quad{\bf return} $C$
	    \Else
	        \State $α^{(k,n)}_{\Xloop} \gets α^{(k,n)}_{\Xloop} + p$;\quad{\bf return} $C$
	    \EndIf
	\ElsIf{$q \xrightarrow{\Omega(s) / a} q' \in C$ for some $a, q'$}
		\State $\XSP' \gets$ \Call{NextStates$_{\Prob}$}{$s,a$}
    	\State {\bf return} \Call{AND-step$_{\Prob, N}$}{$C', q', \XSP', h\cdot ⟨q, s, p⟩$}
    \Else
        \State {\bf non-det.~branch} $a \in \mathcal A$ and $q' \in \{0,\ldots,N-1\}$:
        \State $C' \gets C \cup \{q \xrightarrow{\Omega(s) / a} q' \}$
		\State $\XSP' \gets$ \Call{NextStates$_{\Prob}$}{$s,a$}
        \State {\bf return} \Call{AND-step$_{\Prob, N}$}{$C', q', \XSP', h\cdot ⟨q, s, p⟩$}
    \EndIf
\EndFunction
\algstore{foo}
\end{algorithmic}

\caption{The \textsc{Pandor} algorithm, which synthesizes finite state controllers with looping histories.\label{alg:pandor}}
\end{algorithm}

\begin{algorithm}
\begin{algorithmic}[1]
\algrestore{foo}
\Function{\interfuncstrut{}CumulateAlpha}{$h^{(0:n)}, \bm{\alpha}$}\label{algline:cumulate-alpha}
    \ForAll{$x \in \{\Xgoal, \Xfail, \Xnoter\}$}
	    \State $α^{(n)}_x \gets α^{(n)}_x + p^{(n)} α^{(n+1)}_x / (1 - α^{(n,n)}_{\Xloop})$
	\EndFor
	\For{$k \gets 0 \ldots n - 1$}\label{algline:calc-lambda-loop-for-begin}
            \State $α^{(k,n-1)}_{\Xloop} \gets
                    α^{(k,n-1)}_{\Xloop} + p^{(n)} α_{\Xloop}^{(k,n)} / (1 - α_{\Xloop}^{(n,n)})$
            \State $α_{\Xloop}^{(k,n)} \gets 0$
    \EndFor\label{algline:calc-lambda-loop-for-end}
	\State $α^{(n,n)}_{\Xloop} \gets 0$
    \State {\bf return} $\bm{\alpha}$
\EndFunction
\Function{NextStates$_{\Prob}$}{$s,a$}
	\If{$a=\Xstop$ and $s \in \Goals$}
	    \State \textbf{return} $\{⟨\Xswin, 1.0⟩\}$
	\ElsIf{$a = \Xstop$ and $s \notin \Goals$}
	    \State \textbf{return} $\{⟨\Xsfail, 1.0⟩\}$
	\Else
		\State \textbf{return} $ \{⟨s',p'⟩ \given Δ(s' \given s,a) = p' > 0\}$
	\EndIf
\EndFunction

\Function{\interfuncstrut{}CalcLambda}{$h^{(0:n)},\bm{\alpha}$}
	\State $λ_{x} \gets α^{(n+1)}_{x},$\quad {\bf for} $x \in \{\Xgoal, \Xfail, \Xnoter\}$
	\State $λ_{\Xloop}^{(0:n)} \gets [0, 0, \ldots, 0]$

	\For{$k \gets n \,\ldots\, 0$}
	    \State $λ_{\Xloop}^{(k)} \gets 0$
	    \For{$m \gets n \,\ldots\, k+1$}
	        \State $λ_{\Xloop}^{(k)} \gets λ_{\Xloop}^{(k)} + α_{\Xloop}^{(k,m)} / (1 - λ_{\Xloop}^{(m)})$
	        \State $λ_{\Xloop}^{(k)} \gets p^{(m)} \cdot λ_{\Xloop}^{(k)}$
	    \EndFor
	    \State $λ_{\Xloop}^{(k)} \gets λ_{\Xloop}^{(k)} + α_{\Xloop}^{(k,k)}$
	    \If{$λ_{\Xloop}^{(k)} + λ_{\Xnoter} \approx 1$}\label{algline:loop-noter-begin}
		    \State $λ_{\Xloop}^{(k)} \gets 0$
		    \State $α_{\Xloop}^{(k:, k:)} \gets 0$
		    \State $α_{\Xnoter}^{(k)} \gets α_{\Xnoter}^{(k)} + p^{(k)}$\label{algline:loop-noter-end}
	        \ForAll{$x \in \{\Xgoal, \Xfail, \Xnoter\}$}
			    \State $λ_{x} \gets α_x^{(k)}$
		    \EndFor
	    \Else
    		\ForAll{$x \in \{\Xgoal, \Xfail, \Xnoter\}$}
    			\State $λ_{x} \gets α_x^{(k)} + p^{(k)}\,λ_x/(1 - λ_{\Xloop}^{(k)})$
    		\EndFor
		\EndIf
	\EndFor
	\State \textbf{return} $⟨λ_{\Xgoal}, λ_{\Xfail}, λ_{\Xnoter}⟩$
\EndFunction
\end{algorithmic}
\caption{Helper functions used by {\sc Pandor}.}\label{alg:pandor-helper}
\end{algorithm}


\section{Algorithm for loopy planning}

We propose a search algorithm that provably meets all three of the desiderata. It also instantiates an AND-OR search in that it simulates the runs of a system, enumerates the possible controller extensions whenever it reaches an undefined action, and when an action has multiple outcomes, it does a depth-first search on the next states recursively. However, it fixes the shortcomings stated in the previous section: instead of only allowing controllers that are correct on every run, it synthesizes controllers whose correctness likelihood exceeds some likelihood given as input to the algorithm; and it is capable of handling looping histories. As it is a probabilistic variant of the AND-OR search, we name it {\sc Pandor}.

\subsection{Allowing less than perfect controllers}
The basic idea behind our planner is that it maintains an upper and lower bound for the $\LTERPC$ of the current controller, based on the histories simulated thus far. Whenever a failing run is encountered, the upper bound is decreased by the likelihood of this run; similarly, a goal run increases the lower bound on $\LTERPC$. When the lower bound exceeds the desired correctness likelihood (hereafter denoted by $\LTERPCdesired$), the current controller is guaranteed to be ``good enough'', and the algorithm returns with success. When the upper bound is lower than $\LTERPCdesired$, none of the extensions of the controller is sufficiently good, and we revert the program state to the point of the last non-deterministic choice point. In the simplest variant of {\sc Pandor}, any run with repeated combined states is counted as a failed run, thus it meets \textbf{D1} but not \textbf{D2}. This property leads to an underestimation of the lower and upper bounds on $\LTERPC$, making the search sound but incomplete.

While the planner of \cite{Hu2013-and-or} declared a controller and all of its extensions insufficiently good when it had a single failing run, we relax this condition.
Now a controller is insufficiently good when the total likelihood of all of its failing runs exceeds $1 - \LTERPCdesired$; this results in the same behavior as that of its predecessor when $\LTERPCdesired = 1$. 

\subsection{Correctly counting looping histories}
In order to account for looping histories, we draw on the following insight.
Suppose that for a history $h^{(0:k)}$, there is a history $h_{\Xloop}$ from $h^{(k)}$ with $\Xend(h_{\Xloop}) = h^{(k)}$, and another history $h_{\Xgoal}$ from $h^{(k)}$ that terminates in a goal state.
A \emph{system} with an FSC has the Markov property such that both the next action of the controller and the next state of the environment are defined by the current combined state. As a result, \hbox{$h \cdot h_\Xgoal$}, \hbox{$h \cdot h_\Xloop \cdot h_\Xgoal$}, \hbox{$h \cdot h_\Xloop \cdot h_\Xloop \cdot h_\Xgoal$} and so on are all valid goal histories, where the one with $m$ repetitions of $h_\Xloop$ has likelihood $\ell(h) \big(\ell(h_\Xloop)\big)^m \ell(h_\Xgoal)$.
 These likelihoods form a geometric progression, whose sum for all $m\ge0$ is $\ell(h) \ell(h_\Xgoal) / \big(1 - \ell(h_\Xloop)\big)$. (The existence of two distinct histories from $h^{(k)}$, namely $h_\Xgoal$ and $h_\Xloop$, guarantees that $\ell(h_\Xloop) < 1$.) In the following, we describe how to utilize this argument.

We said that {\sc Pandor} enumerates the histories of FSCs; let $h_{\Xcurrent}^{(0:n)}$ be the currently simulated history at some point during execution. Now we construct the set of all goal histories from pairwise disjoint sets of histories, one set for each $0 \le k \le n$.

Denote by $H_{\Xloop}^{k}$ the set of histories $h_{\Xloop}$ from $h_{\Xcurrent}^{(k)}$ with the following properties:
\begin{list}{}{} 
	\item[\quad\bf L1.] $\smash{h_{\Xloop}^{(0)} = h_{\vphantom{l}\Xcurrent}^{(k)}}$
	\item[\quad\bf L2.] $\Xend(h_{\Xloop}) = \smash{h_{\Xcurrent}^{(k)}}$
	\item[\quad\bf L3.] apart from its first and last element, $h_\Xloop$ doesn't\\
	 \phantom{\bf L3.} contain ${h_{\Xcurrent}^{(k)}}$, and
	\item[\quad\bf L4.] no element of $h_\Xloop$ is equal to $\smash{h_{\Xcurrent}^{(i)}}$ for any $i < k$.
\end{list}
Furthermore, let $H_{\Xgoal}^{k+1}$ denote the set of histories $h_{\Xgoal}$ from $h_{\Xcurrent}^{(k)}$ with the following properties: 
\begin{list}{}{} 
	\item[\quad\bf W1.] ${h_{\Xgoal}^{(0)} = h_{\vphantom{l}\Xcurrent}^{(k)}}$
	\item[\quad\bf W2.] $\Xend(s_{\Xgoal}) \in \Goals$, and $\smash{γ\big(Ω(\Xend(h_{\Xgoal}))\big) = \Xstop}$
	\item[\quad\bf W3.] apart from its first element, $h_\Xgoal$ doesn't contain\\ \phantom{{\bf W3.} }$h_{\Xcurrent}^{(k)}$,
	\item[\quad\bf W4.] no element of $h_\Xgoal$ is equal to $\smash{h_{\Xcurrent}^{(i)}}$ for any $i < k$,
	\item[\quad\bf W5.] $\smash{h_{\Xgoal}^{(1)} \neq h_{\vphantom{l}\Xcurrent}^{(k+1)}}$.
\end{list}

Condition {\bf W2} states that $h_\Xgoal$ is a goal history.
%
%
The other conditions collectively ensure that if
$h^\star$ is a goal history from $h^{(k-1)}$ such that $h^{\star\,(1:)}$ doesn't repeat any element of $h^{(:k-1)}$, then it is either {\bf W1}--{\bf W5} for $k-1$, or it can be uniquely pieced together by elements of $H_\Xloop^k$ and $H_\Xgoal^{k+1}$.
\begin{lemma}
Let $h^\star$ meet {\bf W1}--{\bf W4} for some $k$. Then either of these hold, but not both:
\begin{itemize}\raggedright
	\item $h^\star \in H_\Xgoal^{k}$, or
	\item there is a unique $m ≥ 0$ such that for some
	$h_{\Xloop,i} \in H_\Xloop^{k}$ for all $0 \le i \le m$
	and some $h_{\Xgoal} \in H_\Xgoal^{k+1}$, we have:
\end{itemize}
\begin{align}
	\smash{h^{\star} = h_\Xcurr^{(k-1:k)} \cdot h_{\Xloop,1} \cdot \ldots \cdot h_{\Xloop,m} \cdot h_{\Xgoal}.}
\end{align}
\end{lemma}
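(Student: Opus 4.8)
The plan is to prove the dichotomy by a single case split on the first transition of $h^\star$, comparing it against the step $h_\Xcurr^{(k-1:k)}$ taken by the current history, and to obtain the factorisation (in the case it exists) by segmenting $h^\star$ at its successive visits to the combined state $h_\Xcurr^{(k)}$. Throughout I read the conditions \textbf{W1}--\textbf{W5} and \textbf{L1}--\textbf{L4} at starting index $k-1$ and $k$ respectively, which is what makes the indices in the statement line up ($h^\star$ starts at $h_\Xcurr^{(k-1)}$, loops sit at $h_\Xcurr^{(k)}$, the tail leaves from $h_\Xcurr^{(k)}$).

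First I would settle the ``but not both'' half, since it is clean and pins down the two branches. If $h^\star \in H_\Xgoal^{k}$ then \textbf{W5} forces $h^{\star(1)} \neq h_\Xcurr^{(k)}$; conversely any history of the displayed form opens with $h_\Xcurr^{(k-1:k)}$, whose second element is exactly $h_\Xcurr^{(k)}$, so $h^{\star(1)} = h_\Xcurr^{(k)}$. The two alternatives therefore disagree on $h^{\star(1)}$ and are mutually exclusive. The same scalar drives existence: the branch $h^{\star(1)} \neq h_\Xcurr^{(k)}$ yields the first alternative outright, because $h^\star$ already satisfies \textbf{W1}--\textbf{W4} by hypothesis and the inequality is precisely \textbf{W5}, so $h^\star \in H_\Xgoal^{k}$ with nothing left to check.

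In the remaining branch $h^{\star(1)} = h_\Xcurr^{(k)}$ I would let $1 = t_0 < t_1 < \cdots < t_m$ enumerate all indices $t \ge 1$ with $h^{\star(t)} = h_\Xcurr^{(k)}$ (finitely many, as $h^\star$ is finite), and cut $h^\star$ into its prefix $h^{\star(0:1)} = h_\Xcurr^{(k-1:k)}$, the segments $h_{\Xloop,i} := h^{\star(t_{i-1}:t_i)}$ for $1 \le i \le m$, and the tail $h_{\Xgoal} := h^{\star(t_m:)}$. Consecutive segments meet at a shared combined state, so the concatenation is compatible and reconstitutes $h^\star$; moreover $m$ and the cut points are forced by the positions of $h_\Xcurr^{(k)}$ in $h^\star$, which gives uniqueness. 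For each loop segment \textbf{L1} and \textbf{L2} hold by choice of endpoints, \textbf{L3} holds because adjacent $t_i$ leave no interior occurrence of $h_\Xcurr^{(k)}$, and \textbf{L4} follows because \textbf{W3}--\textbf{W4} of $h^\star$ together guarantee that $h^{\star(1:)}$ avoids every $h_\Xcurr^{(i)}$ with $i \le k-1$, and each segment lies inside $h^{\star(1:)}$. The same avoidance gives \textbf{W4} for the tail, while \textbf{W1}--\textbf{W2} are immediate ($h_{\Xgoal}$ starts at $h_\Xcurr^{(k)}$ and inherits the goal-terminating last element of $h^\star$) and \textbf{W3} holds because $t_m$ is the \emph{last} visit to $h_\Xcurr^{(k)}$.

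The one condition that is not bookkeeping is \textbf{W5} for the tail: that $h_{\Xgoal}^{(1)} \neq h_\Xcurr^{(k+1)}$. This is exactly the requirement separating the level-$k$ contribution from the deeper levels of the global disjoint decomposition, and I expect it to be the main obstacle, since everything else is forced by the segmentation. I would attack it by contradiction, arguing that if the final excursion re-entered the current history via $h_\Xcurr^{(k+1)}$, then $h^\star$ would already be accounted for at index $k+1$, so that in the regime where the lemma is applied the last departure from $h_\Xcurr^{(k)}$ must deviate; nailing this step down precisely, and checking that it stays consistent with the uniqueness of $m$ established above, is where the real care is needed.
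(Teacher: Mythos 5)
Your segmentation strategy is the natural one, and every step you actually carry out is sound: exclusivity of the two alternatives via the second element of $h^\star$, the immediate-deviation case, \textbf{L1}--\textbf{L3} and \textbf{W3} by the choice of cut points, and \textbf{L4}/\textbf{W4} for each segment from \textbf{W3}--\textbf{W4} of $h^\star$. (The paper states this lemma without proof, so there is no official argument to compare against.) However, the step you flag as the remaining obstacle --- \textbf{W5} for the tail --- is not a gap you can close, because that claim is false as the lemma is written. Consider a history that follows the current history for two further steps and then stops in a fresh goal state,
\begin{equation*}
  h^\star \;=\; \bigl\langle\, h_\Xcurr^{(k-1)},\; h_\Xcurr^{(k)},\; h_\Xcurr^{(k+1)},\; \langle q',s_g\rangle \,\bigr\rangle,
  \qquad s_g \in \Goals,\quad \gamma\bigl(q',\Omega(s_g)\bigr)=\Xstop,
\end{equation*}
with no repeated combined states; such an $h^\star$ exists in any environment where some outcome of the action taken after $h_\Xcurr^{(k+1)}$ is a fresh goal state on which the controller halts. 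This $h^\star$ meets \textbf{W1}--\textbf{W4} for $k$; it is not in $H_\Xgoal^{k}$ because $h^{\star(1)}=h_\Xcurr^{(k)}$; and its forced segmentation has $m=0$ with tail $\langle h_\Xcurr^{(k)}, h_\Xcurr^{(k+1)}, \langle q',s_g\rangle\rangle$, whose second element is $h_\Xcurr^{(k+1)}$, violating \textbf{W5} at level $k+1$. So neither alternative holds. Your proposed rescue by contradiction (``$h^\star$ is already accounted for at index $k+1$'') cannot work either: the lemma quantifies over every $h^\star$ satisfying \textbf{W1}--\textbf{W4}, and there is no surrounding ``regime of application'' to appeal to --- the fact that such a history is counted at a deeper level of the global decomposition does not make the dichotomy at level $k$ true of it.

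What your construction does prove is the corrected statement, which is also the one the paper actually uses downstream: either $h^\star\in H_\Xgoal^{k}$, or $h^\star = h_\Xcurr^{(k-1:k)}\cdot h_{\Xloop,1}\cdots h_{\Xloop,m}\cdot h'$ uniquely, with $h_{\Xloop,i}\in H_\Xloop^{k}$ and $h'$ meeting \textbf{W1}--\textbf{W4} for $k+1$ (\textbf{W5} dropped). Evidence that this is the intended content: the recursion $\tilde{\lambda}_\Xgoal^{(k)} = p^{(k)}\,\tilde{\lambda}_\Xgoal^{(k+1)}/\bigl(1-\tilde{\lambda}_\Xloop^{(k)}\bigr) + \tilde{\alpha}_\Xgoal^{(k)}$ multiplies the loop factor by $\tilde{\lambda}_\Xgoal^{(k+1)}$, the total mass of \textbf{W1}--\textbf{W4} histories at level $k+1$, and not by $\tilde{\alpha}_\Xgoal^{(k+1)}$, the mass of $H_\Xgoal^{k+1}$; were the lemma true as stated, that factor would be $\tilde{\alpha}_\Xgoal^{(k+1)}$. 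With the weakened tail condition your proof is complete exactly as written, and the global decomposition of a goal history into loops at successive indices plus one final element of some $H_\Xgoal^{j}$ follows by applying the corrected lemma inductively to $h'$ (each application strips at least one step off the front, so the induction terminates). In short: keep your segmentation proof, abandon the attempt to establish \textbf{W5} for the tail, and amend the statement instead.
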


Let $\tilde{\alpha}_{\Xgoal}^{(k)}$ and $\tilde{\lambda}_{\Xloop}^{(k)}$ denote the sum of the likelihood of the elements of $H_\Xgoal^{(k)}$ and $H_\Xloop^{(k)}$.
\begin{align} 
    \tilde{\alpha}_{\Xgoal}^{(k)} &= \sum_{h_{\Xgoal} \in H_\Xgoal^{(k)}} \ell(h_\Xgoal) \\
    \tilde{\lambda}_{\Xloop}^{(k)} &= \sum_{h_{\Xloop} \in H_\Xloop^{(k)}} \ell(h_\Xloop)
\end{align}

The likelihood of the system terminating in a goal state (s.t.i.g.) if started in the initial state $h_{\Xcurr}^{(0)}$ can be calculated inductively, as follows.
\begin{lemma} 
    Let $\tilde{\lambda}_{\Xgoal}^{(k)}$ denote the probability of s.t.i.g.\ after the history $h_\Xcurr^{(:k-1)}$ without repeating any combined state of $h_\Xcurr^{(:k-1)}$. Then the following hold for any $0 \le k < n$:
    \begin{align}
        \tilde{\lambda}_{\Xgoal}^{(n)} &= \tilde{\alpha}_\Xgoal^{(n)}, \\
        \tilde{\lambda}_{\Xgoal}^{(k)} &= p^{(k)} \tilde{\lambda}_\Xgoal^{(k+1)} / (1 - \tilde{\lambda}_\Xloop^{(k)}) + \tilde\alpha_{\Xgoal}^{(k)},
    \end{align}
    where $p^{(k)}$ is the probability of transitioning from $h_{\Xcurr}^{(k-1)}$ to $h_\Xcurr^{(k)}$, i.e.
    \begin{equation}
        p^{(k)} = Δ\big(s_\Xcurr^{(k)} \ \big|\ s_\Xcurr^{(k-1)}, δ(q_\Xcurr^{(k-1)}, Ω(h_\Xcurr^{(k-1)})) \big).
    \end{equation}
\end{lemma}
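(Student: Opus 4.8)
The plan is to read the recursion as a \emph{deviation-point} decomposition of the set of goal continuations counted by $\tilde{\lambda}_{\Xgoal}^{(k)}$, combined with a geometric-series summation over how many times such a continuation loops through the combined state $h_{\Xcurr}^{(k)}$. The one structural fact I need at the outset is that the likelihood is multiplicative over concatenation: if $h_1$ and $h_2$ are compatible then $\ell(h_1 \cdot h_2) = \ell(h_1)\,\ell(h_2)$. This is immediate from the inductive definition of $\ell$ together with the Markov property of the system (each transition probability $\Delta(s^{(t+1)}\mid s^{(t)},a^{(t)})$ depends only on the current combined state, since $a^{(t)}$ is determined by $\langle q^{(t)},s^{(t)}\rangle$). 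Multiplicativity is what lets me turn sums of products of likelihoods into products of sums, and in particular lets me write $\ell(h_{\Xcurr}^{(k-1:k)}) = p^{(k)}$.

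First I would dispatch the base case $\tilde{\lambda}_{\Xgoal}^{(n)} = \tilde{\alpha}_{\Xgoal}^{(n)}$. At the frontier index $k = n$ there is no further combined state $h_{\Xcurr}^{(n+1)}$ of the current history to follow, so the ``follow'' branch of the recursion is vacuous and every goal continuation from $h_{\Xcurr}^{(n-1)}$ that avoids every state of $h_{\Xcurr}^{(:n-1)}$ is already a member of $H_{\Xgoal}^{n}$; summing their likelihoods gives $\tilde{\alpha}_{\Xgoal}^{(n)}$ by definition. Pinning down this frontier convention precisely is a small piece of bookkeeping, but it is routine.

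For the recursion with $0 \le k < n$, I would fix $k$ and partition the goal continuations $h^\star$ from $h_{\Xcurr}^{(k-1)}$ (avoiding every combined state of $h_{\Xcurr}^{(:k-1)}$) according to their first step. Those whose first step is \emph{not} $h_{\Xcurr}^{(k)}$ are exactly the elements of $H_{\Xgoal}^{k}$ and contribute $\tilde{\alpha}_{\Xgoal}^{(k)}$. Those whose first step \emph{is} $h_{\Xcurr}^{(k)}$ are handled by the preceding decomposition lemma, which writes each uniquely as $h_{\Xcurr}^{(k-1:k)} \cdot h_{\Xloop,1}\cdots h_{\Xloop,m}\cdot h_{\Xgoal}$ with $m \ge 0$, each $h_{\Xloop,i} \in H_{\Xloop}^{k}$, and a final goal segment that leaves $h_{\Xcurr}^{(k)}$ for good. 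Using multiplicativity the likelihood of such an $h^\star$ factors as $p^{(k)}\cdot\prod_i \ell(h_{\Xloop,i})\cdot \ell(h_{\Xgoal})$. Because the decomposition is a bijection (existence and uniqueness from that lemma) and all terms are nonnegative, I may sum over the independent choices: summing over $h_{\Xloop,i}$ for each fixed $m$ yields $(\tilde{\lambda}_{\Xloop}^{(k)})^{m}$; summing over the final segment yields the total likelihood of goal continuations from $h_{\Xcurr}^{(k)}$ that never return to $h_{\Xcurr}^{(k)}$ and avoid $h_{\Xcurr}^{(:k)}$, which is precisely $\tilde{\lambda}_{\Xgoal}^{(k+1)}$; and summing the geometric series over $m\ge 0$ yields $1/(1-\tilde{\lambda}_{\Xloop}^{(k)})$. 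Collecting the factors gives the follow contribution $p^{(k)}\,\tilde{\lambda}_{\Xgoal}^{(k+1)}/(1-\tilde{\lambda}_{\Xloop}^{(k)})$, and adding the two cases yields the claimed recursion.

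The main obstacle is the careful index bookkeeping in the middle step, specifically identifying the sum over final segments with $\tilde{\lambda}_{\Xgoal}^{(k+1)}$ rather than with $\tilde{\alpha}_{\Xgoal}^{(k+1)}$: a loop in $H_{\Xloop}^{k}$ and a final segment may both traverse the \emph{later} states $h_{\Xcurr}^{(k+1)},h_{\Xcurr}^{(k+2)},\dots$ of the current history (conditions \textbf{L4} and \textbf{W4} only forbid the \emph{earlier} states $h_{\Xcurr}^{(i)}$, $i<k$), so I must verify that peeling off the loops at $h_{\Xcurr}^{(k)}$ leaves exactly the continuations counted by $\tilde{\lambda}_{\Xgoal}^{(k+1)}$ and that this peeling respects the disjointness asserted by the decomposition lemma. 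The second delicate point is the interchange and convergence of the infinite geometric sum: since every summand is nonnegative this is a Tonelli-type rearrangement with no analytic subtlety, \emph{provided} $\tilde{\lambda}_{\Xloop}^{(k)} < 1$. I would justify the latter exactly as in the discussion of \textbf{D2}: the existence of an escaping (goal) continuation alongside the loops means the loop likelihoods cannot exhaust the unit probability mass leaving $h_{\Xcurr}^{(k)}$, so $\tilde{\lambda}_{\Xloop}^{(k)} < 1$ and the series sums to $1/(1-\tilde{\lambda}_{\Xloop}^{(k)})$.
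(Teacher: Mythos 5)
Your proof is correct and takes essentially the same route as the paper's: the paper likewise splits the goal continuations from $h_\Xcurr^{(k-1)}$ according to whether the $k$-th step deviates from $h_\Xcurr^{(k)}$ (contributing $\tilde\alpha_\Xgoal^{(k)}$) or follows it (contributing a geometric series with ratio $\tilde\lambda_\Xloop^{(k)}$, which sums to $p^{(k)}\tilde\lambda_\Xgoal^{(k+1)}/(1-\tilde\lambda_\Xloop^{(k)})$). The details you flag --- multiplicativity of $\ell$, identifying the follow-branch tail with $\tilde\lambda_\Xgoal^{(k+1)}$ rather than $\tilde\alpha_\Xgoal^{(k+1)}$, and convergence via $\tilde\lambda_\Xloop^{(k)}<1$ --- are left implicit in the paper's brief justification, so your write-up is the same argument carried out more explicitly.
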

This is because $\tilde{\lambda}_{\Xgoal}^{(k)}$ is equal to
the probability of s.t.i.g.\ from $h_\Xcurr^{(:k)}$ without repeating $h_\Xcurr^{(:k-1)}$\ \ plus
the probability of s.t.i.g.\ from $h_\Xcurr^{(:k-1)}$ without repeating $h_\Xcurr^{(:k-1)}$ if the $k$th combined state is not equal to $h_\Xcurr^{(k)}$. The latter is simply $\tilde\alpha_{\Xgoal}^{(k)}$, and the former is the sum of an infinite geometric series whose ratio is $\tilde\lambda^{(k)}_\Xloop$.

It is easy to see that
 with the natural definition of \hbox{$h^{(:-1)} := ⟨⟩$}, the probability of s.t.i.g.\ from $h_\Xcurr^{(0)}$ is $\LTERPC$.
\begin{lemma}\label{lemma:LGT-lambda-goal}
    $\LGT = \tilde{\lambda}_{\Xgoal}^{(0)}$
\end{lemma}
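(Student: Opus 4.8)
The plan is to unfold the definition of $\tilde{\lambda}_{\Xgoal}^{(k)}$ from the preceding lemma at the base index $k=0$, and then identify the resulting quantity with the sum that defines $\LGT$.

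First I would instantiate $\tilde{\lambda}_{\Xgoal}^{(k)}$ at $k=0$. Using the convention $h_\Xcurr^{(:-1)} := \langle\rangle$, the value $\tilde{\lambda}_{\Xgoal}^{(0)}$ is by definition the probability that the system terminates in a goal state after the history $h_\Xcurr^{(:-1)} = \langle\rangle$ without repeating any combined state of $\langle\rangle$. Since $\langle\rangle$ contains no combined states, the no-repetition proviso is vacuous and the empty prefix contributes nothing; hence $\tilde{\lambda}_{\Xgoal}^{(0)}$ is simply the unrestricted probability that the system, started in its initial combined state $h_\Xcurr^{(0)} = \langle q_0, s_0\rangle$, terminates in a goal state.

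Next I would argue that this unrestricted probability is exactly $\LGT$. A run terminates in a goal state precisely when it is a goal history, i.e.\ it ends with $a^{(T)} = \Xstop$ and $s^{(T)} \in \Goals$; distinct goal histories are mutually exclusive maximal terminating runs, and $\ell(h)$ is precisely the probability that the run $h$ is realized. Therefore the probability of terminating in a goal state equals $\sum_h \ell(h)$ ranging over all goal histories $h$ from $\langle q_0, s_0\rangle$, which is the defining sum of $\LGT$. Combined with the previous paragraph this yields $\tilde{\lambda}_{\Xgoal}^{(0)} = \LGT$.

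The one point needing care — and the main obstacle — is to confirm that the restriction built into $\tilde{\lambda}_{\Xgoal}^{(k)}$ through the loop-free sets $H_\Xgoal^{k}$, $H_\Xloop^{k}$ and the geometric-series bookkeeping of the previous lemma does not quietly discard looping goal histories at $k=0$. I would stress that the proviso ``without repeating any combined state of $h_\Xcurr^{(:k-1)}$'' forbids only revisiting states of the already-fixed prefix and never constrains repetitions inside the continuation; looping goal histories are thus retained and are summed correctly through the $1/(1-\tilde{\lambda}_{\Xloop}^{(k)})$ factors of the recursion. At $k=0$ the prefix is empty, so no goal history is excluded at all, and since the series is bounded above by the total probability $1$ it converges to the full sum $\LGT$ rather than to a partial one.
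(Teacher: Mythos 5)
Your proof is correct and follows essentially the same route as the paper, which simply observes that with the convention $h^{(:-1)} := \langle\rangle$ the no-repetition proviso in the definition of $\tilde{\lambda}_{\Xgoal}^{(0)}$ becomes vacuous, so that quantity is the unrestricted probability of terminating in a goal state, i.e.\ the sum of $\ell(h)$ over all goal histories, which is $\LGT$ by definition. Your additional care about looping goal histories not being excluded is a sound (and welcome) elaboration of the same argument rather than a different approach.
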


\subsection{Measuring the looping goal likelihood}

While {\sc Pandor} simulates the histories of a controller,
it maintains variables $α_{\Xgoal}^{(k)}$ for each $k$, which is the sum of likelihoods of the elements $h \in H_\Xgoal^{(k)}$ such that $h_{\Xcurr}^{(k-1)} \cdot h$ has already been visited. (This is done by increasing the relevant $α_\Xgoal^{(k)}$ in the OR-step when the controller terminates in a goal state.)
The following inequality trivially holds:
\begin{lemma}\label{lemma:alpha-goal-inequality}
$α_{\Xgoal}^{(k)} \le \tilde\alpha_{\Xgoal}^{(k)}$, with equality when every non-looping goal history from $h_\Xcurr^{(k-1)}$ has been visited.
\end{lemma}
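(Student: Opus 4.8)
The plan is to recognise this as a pure subset-sum inequality: $\alpha_\Xgoal^{(k)}$ and $\tilde\alpha_\Xgoal^{(k)}$ sum the \emph{same} nonnegative quantities $\ell(h)$, the only difference being that the former ranges over those $h \in H_\Xgoal^{(k)}$ already enumerated by {\sc Pandor}, while the latter ranges over all of $H_\Xgoal^{(k)}$. First I would fix the point of execution in question and let $V^{(k)} \subseteq H_\Xgoal^{(k)}$ denote the set of $h \in H_\Xgoal^{(k)}$ for which the concatenation $h_\Xcurr^{(k-1)} \cdot h$ has already been visited, so that by the definition of $\alpha_\Xgoal^{(k)}$ we have $\alpha_\Xgoal^{(k)} = \sum_{h \in V^{(k)}} \ell(h)$ with $V^{(k)} \subseteq H_\Xgoal^{(k)}$.

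Next I would invoke nonnegativity of the summands: each $\ell(h)$ is a likelihood, hence $\ell(h) \ge 0$ (indeed $\ell(h) > 0$, since a history requires $p^{(t+1)} > 0$ at every step). Summing a family of nonnegative terms over the subset $V^{(k)}$ can therefore only underestimate the sum over the full set $H_\Xgoal^{(k)}$, yielding $\alpha_\Xgoal^{(k)} \le \tilde\alpha_\Xgoal^{(k)}$. This disposes of the inequality.

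For the equality claim I would argue that $\alpha_\Xgoal^{(k)} = \tilde\alpha_\Xgoal^{(k)}$ holds exactly when $V^{(k)} = H_\Xgoal^{(k)}$ (both directions follow from strict positivity of the $\ell(h)$), and then identify $H_\Xgoal^{(k)}$ with the set of non-looping goal histories issuing from $h_\Xcurr^{(k-1)}$. Reading the defining conditions with the index shift $k \mapsto k-1$, conditions \textbf{W1} and \textbf{W2} say precisely that each element is a goal history starting at $h_\Xcurr^{(k-1)}$, while \textbf{W3}--\textbf{W5} are exactly the non-looping constraints (no return to $h_\Xcurr^{(k-1)}$ after the first step, no revisit of any earlier combined state $h_\Xcurr^{(i)}$ with $i < k-1$, and a first transition distinct from the one taken on the current branch). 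Hence $H_\Xgoal^{(k)}$ comprises precisely the non-looping goal histories from $h_\Xcurr^{(k-1)}$, so $V^{(k)} = H_\Xgoal^{(k)}$ if and only if every such history has been visited, matching the stated condition.

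As the paper already signals by calling the inequality trivial, there is no real obstacle here; the only step warranting care is the bookkeeping that links the algorithmic predicate ``$h_\Xcurr^{(k-1)} \cdot h$ has already been visited'' to genuine set membership in a subset of $H_\Xgoal^{(k)}$, together with the verification that \textbf{W1}--\textbf{W5} admit no looping history and omit no non-looping goal history, so that the equality case coincides with the set equality $V^{(k)} = H_\Xgoal^{(k)}$.
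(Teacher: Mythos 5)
Your proof is correct and is essentially the paper's own argument: the paper defines $\alpha_\Xgoal^{(k)}$ as the sum of $\ell(h)$ over exactly those $h \in H_\Xgoal^{(k)}$ whose concatenation $h_\Xcurr^{(k-1)} \cdot h$ has been visited, and then asserts the lemma ``trivially holds,'' which is precisely your subset-sum-of-positive-terms observation. Your write-up is if anything more explicit than the paper's one-line justification, since you also record the index shift $k \mapsto k-1$ in \textbf{W1}--\textbf{W5} and note that strict positivity of $\ell(h)$ is what makes the equality case coincide with the set equality $V^{(k)} = H_\Xgoal^{(k)}$.
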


When a looping history is found in an OR-step, i.e., $⟨q, s⟩ = h_\Xcurr^{(k)}$ for some $k\le \Xlen(h_\Xcurr)$, such that the likelihood of the loop is $\ell(h_\Xcurr^{(k:n)}) \cdot p = p_\Xloop < 1$, then $α_{\Xloop}^{(k,m)}$ is increased by $p_\Xloop$. (Note that in the {\sc OR-step} function, $h_\Xcurr$ doesn't yet include the current combined state.)
The intuitive meaning of $α_{\Xloop}^{(k,m)}$ is the probability estimate of looping to $h^{(k)}$ after a history of $h^{(:m)}$ with not $h^{(m+1)}$ as the $(m+1)$-st step, without repeating any of $h^{(:m-1)}$ and repeating $h^{(m)}$ exactly once (Fig.~\ref{fig:loops-in-loops}).
Now we can calculate $λ_\Xloop^{(k)}$: we can loop to step $k$ with not $h^{(k+1)}$ as the $(k+1)$-st step, or with $h^{(k+1:m)}$ as the next steps, repeating $h^{(m)}$ any number of times but then looping back to $h^{(k)}$ from $h^{(m)}$ without $h^{(m+1)}$ as the next step -- for any $k < m \le n$.
\begin{align}
    λ_{\Xloop}^{(k)}
        &= α_\Xloop^{(k,k)} + \\
        &\quad+ p^{(k+1)} \ \big(1 - λ_\Xloop^{(k+1)}\big)^{-1}\  α_\Xloop^{(k,k+1)} + \ldots + \notag\\
        &\quad+ p^{(k+1)} p^{(k+2)} \cdots p^{(n)} \ \big(1 - λ_\Xloop^{(n)}\big)^{-1}\ α_\Xloop^{(k,n)}. \notag
\end{align}
This calculation is done in \hbox{lines \ref{algline:calc-lambda-loop-for-begin}--\ref{algline:calc-lambda-loop-for-end}} of Alg.~\ref{alg:pandor-helper}.
The following result is proved easily by induction on $k$.
\begin{lemma}\label{lemma:lambda-loop-inequality}
$λ_\Xloop^{(k)} \le \tilde\lambda_\Xloop^{(k)}$, with equality when every once-looping history from $h_\Xcurr^{(k)}$ has been visited.
\end{lemma}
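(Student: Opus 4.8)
The plan is to prove the bound by backward induction on $k$, running from $k=n$ down to $k=0$, since the displayed recursion expresses $\lambda_\Xloop^{(k)}$ in terms of $\lambda_\Xloop^{(k+1)},\ldots,\lambda_\Xloop^{(n)}$ together with the visited loop-masses $\alpha_\Xloop^{(k,m)}$. The whole argument parallels the preceding goal-history development: $\lambda_\Xloop^{(k)}$ is the algorithm's running estimate and $\tilde\lambda_\Xloop^{(k)}$ is its exact counterpart, and the gap between them is governed entirely by the gap between the visited masses $\alpha_\Xloop^{(k,m)}$ and their exact analogues $\tilde\alpha_\Xloop^{(k,m)}$ (the sum of likelihoods over \emph{all} loop segments of the type counted by $\alpha_\Xloop^{(k,m)}$, visited or not).

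First I would establish the structural identity that $\tilde\lambda_\Xloop^{(k)}$ obeys the very same recursion that defines $\lambda_\Xloop^{(k)}$, but with $\tilde\alpha_\Xloop^{(k,m)}$ in place of $\alpha_\Xloop^{(k,m)}$. This rests on a decomposition of $H_\Xloop^{(k)}$ entirely analogous to the goal-history decomposition preceding Lemma~\ref{lemma:LGT-lambda-goal}: every loop $h_\Xloop \in H_\Xloop^{(k)}$ is classified by the largest index $m$, $k \le m \le n$, up to which it follows $h_\Xcurr$ before taking a step other than $h_\Xcurr^{(m+1)}$; between reaching $h_\Xcurr^{(m)}$ and finally leaving it, $h_\Xloop$ may circle $h_\Xcurr^{(m)}$ an arbitrary number of times using the loops counted by $\tilde\lambda_\Xloop^{(m)}$. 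Summing the resulting geometric series in $\tilde\lambda_\Xloop^{(m)}$ produces the factor $(1-\tilde\lambda_\Xloop^{(m)})^{-1}$, the deterministic prefix $h_\Xcurr^{(k:m)}$ contributes $p^{(k+1)}\cdots p^{(m)}$, and the exit-and-return segment contributes $\tilde\alpha_\Xloop^{(k,m)}$, reproducing the $m$-th term exactly. Verifying that this classification is a genuine partition — each loop arising from exactly one $(m,\text{inner loops},\text{exit segment})$ triple, with conditions \textbf{L3}--\textbf{L4} transferring correctly to each piece — is the step I expect to be most delicate.

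Granting the identity, the induction is routine. For the base case $k=n$ both recursions collapse to their single $m=n$ term, giving $\lambda_\Xloop^{(n)} = \alpha_\Xloop^{(n,n)} \le \tilde\alpha_\Xloop^{(n,n)} = \tilde\lambda_\Xloop^{(n)}$, where the inequality is the trivial loop-analogue of Lemma~\ref{lemma:alpha-goal-inequality} (the visited mass never exceeds the true one). For the step I would assume $\lambda_\Xloop^{(m)} \le \tilde\lambda_\Xloop^{(m)}$ for every $m>k$ and compare the recursions term by term. Each term $p^{(k+1)}\cdots p^{(m)}\,(1-\lambda_\Xloop^{(m)})^{-1}\,\alpha_\Xloop^{(k,m)}$ is monotonically increasing both in $\alpha_\Xloop^{(k,m)}$ (nonnegative coefficient) and in $\lambda_\Xloop^{(m)}$ (since $x \mapsto (1-x)^{-1}$ is increasing on $[0,1)$); combining the componentwise bound $\alpha_\Xloop^{(k,m)} \le \tilde\alpha_\Xloop^{(k,m)}$ with the inductive hypothesis bounds each term above by its tilde-counterpart, and summation yields $\lambda_\Xloop^{(k)} \le \tilde\lambda_\Xloop^{(k)}$.

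For the equality clause I would argue that once every once-looping history from $h_\Xcurr^{(k)}$ has been visited, each visited mass saturates, $\alpha_\Xloop^{(k,m)} = \tilde\alpha_\Xloop^{(k,m)}$; moreover, whenever $\tilde\alpha_\Xloop^{(k,m)} > 0$, having visited those once-looping histories through $h_\Xcurr^{(m)}$ forces the relevant once-looping histories from $h_\Xcurr^{(m)}$ to have been visited as well, so the inductive equality $\lambda_\Xloop^{(m)} = \tilde\lambda_\Xloop^{(m)}$ applies exactly on the terms that carry weight, turning every inequality above into an equality. Two points constitute the real obstacle: \textbf{(i)} verifying the partition in the decomposition, as noted; and \textbf{(ii)} ensuring each factor $(1-\tilde\lambda_\Xloop^{(m)})^{-1}$ is finite, i.e.\ $\tilde\lambda_\Xloop^{(m)} < 1$. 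The latter follows from the same observation used for the geometric sum earlier in the section — the existence of any escaping continuation at $h_\Xcurr^{(m)}$ forces the total looping mass there strictly below $1$ — but one must confirm such an escape exists on exactly the branches that contribute, so that the recursion is well defined throughout.
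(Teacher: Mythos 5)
Your high-level route---backward induction on $k$, term-by-term monotonicity of $x \mapsto (1-x)^{-1}$, and saturation of the visited masses for the equality clause---is the same route the paper takes (its entire stated proof is that the result follows ``by induction on $k$''), and the \emph{inequality} half of your argument does go through. The genuine gap sits exactly at the step you flagged as delicate: the classification of $H_\Xloop^{(k)}$ you propose is not a partition, and, worse, the single-geometric-factor identity for $\tilde\lambda_\Xloop^{(k)}$ on which your whole induction rests is false. The reason is that condition \textbf{L4} only forbids a loop from meeting states $h_\Xcurr^{(i)}$ with $i<k$; nothing prevents an element of $H_\Xloop^{(k)}$ from revisiting intermediate states $h_\Xcurr^{(j)}$ with $j>k$ after deviating, nor from interleaving circles at such a state with further progress along $h_\Xcurr$. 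Consider
\[
h_\Xcurr^{(k)} \to h_\Xcurr^{(k+1)} \to h_\Xcurr^{(k+2)} \to x \to h_\Xcurr^{(k+1)} \to h_\Xcurr^{(k+2)} \to y \to h_\Xcurr^{(k)},
\]
where $x,y$ are off-history states and $y \neq h_\Xcurr^{(k+3)}$. This is a legitimate element of $H_\Xloop^{(k)}$, yet no term of your identity counts it: for $m=k+1$ the segment after the last visit to $h_\Xcurr^{(k+1)}$ begins with the step to $h_\Xcurr^{(k+2)}$, which the definition of $\alpha_\Xloop^{(k,k+1)}$ forbids; for $m=k+2$ the inner circle $h_\Xcurr^{(k+2)} \to x \to h_\Xcurr^{(k+1)} \to h_\Xcurr^{(k+2)}$ visits $h_\Xcurr^{(k+1)}$ and so is not in $H_\Xloop^{(k+2)}$; and $m=k$ or $m\ge k+3$ are incompatible with the prefix. (Your rule ``largest index followed before deviating'' also misassigns simpler loops, e.g.\ one that deviates at $h_\Xcurr^{(k+2)}$ and returns through $h_\Xcurr^{(k+1)}$; those at least are captured at a smaller $m$, but the loop above is captured nowhere.)

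The exact recursion that $\tilde\lambda_\Xloop^{(k)}$ actually satisfies---obtained by recursively splitting each loop at the \emph{last} visit to $h_\Xcurr^{(k+1)}$, then $h_\Xcurr^{(k+2)}$, and so on---carries a factor $\bigl(1-\tilde\lambda_\Xloop^{(j)}\bigr)^{-1}$ for \emph{every} intermediate index $j=k+1,\dots,m$, not only for the deviation index $m$. Since each such factor is at least $1$, your single-factor expression can only underestimate $\tilde\lambda_\Xloop^{(k)}$, which is why the bound $\lambda_\Xloop^{(k)} \le \tilde\lambda_\Xloop^{(k)}$ survives your argument essentially unchanged. But the equality clause does not: even with every $\alpha_\Xloop^{(k,m)}$ saturated at its exact counterpart, the single-factor formula falls strictly short whenever some intermediate $\tilde\lambda_\Xloop^{(j)}$ is positive on a contributing branch, precisely because of loops like the one above, which are not themselves once-looping and hence are never simulated directly---their mass must be reconstructed by the formula, and your formula cannot reconstruct it. To repair the proof you must either establish the multi-factor identity and show it is what \textsc{CalcLambda} effectively computes, or track how the algorithm really maintains $\alpha_\Xloop$ (the loop increments and the \textsc{CumulateAlpha} roll-up, which the paper's later lemma on preserving these bounds is responsible for); your static reconstruction of $\alpha_\Xloop^{(k,m)}$ as ``visited exit-segment mass'' models neither.
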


Using Lemmas \ref{lemma:alpha-goal-inequality}~and~\ref{lemma:lambda-loop-inequality}, the following can be seen:

\begin{lemma}\label{lemma:lambda-goal-inequality}
$λ_\Xgoal^{(k)} \le \tilde\lambda_\Xgoal^{(k)}$, with equality when every non-looping goal history from $h_\Xcurr^{(k-1)}$ has been visited.
\end{lemma}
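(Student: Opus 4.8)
The plan is to prove both halves of Lemma~\ref{lemma:lambda-goal-inequality} by downward induction on $k$, running from $k=n$ to $k=0$, leaning on the fact that the algorithm computes $\lambda_\Xgoal^{(k)}$ by exactly the recursion that characterises $\tilde\lambda_\Xgoal^{(k)}$, only with the visited-so-far estimates $\alpha_\Xgoal^{(k)}$ and $\lambda_\Xloop^{(k)}$ in place of the true quantities $\tilde\alpha_\Xgoal^{(k)}$ and $\tilde\lambda_\Xloop^{(k)}$; that is, $\lambda_\Xgoal^{(n)} = \alpha_\Xgoal^{(n)}$ and $\lambda_\Xgoal^{(k)} = p^{(k)}\lambda_\Xgoal^{(k+1)}/(1-\lambda_\Xloop^{(k)}) + \alpha_\Xgoal^{(k)}$. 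First I would record that the denominators are safe: Lemma~\ref{lemma:lambda-loop-inequality} gives $\lambda_\Xloop^{(k)} \le \tilde\lambda_\Xloop^{(k)}$, and since $\tilde\lambda_\Xloop^{(k)} < 1$ is already established, both $1-\lambda_\Xloop^{(k)}$ and $1-\tilde\lambda_\Xloop^{(k)}$ are strictly positive and moreover $1/(1-\lambda_\Xloop^{(k)}) \le 1/(1-\tilde\lambda_\Xloop^{(k)})$.

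For the inequality, the base case $k=n$ is immediate from Lemma~\ref{lemma:alpha-goal-inequality}, as $\lambda_\Xgoal^{(n)} = \alpha_\Xgoal^{(n)} \le \tilde\alpha_\Xgoal^{(n)} = \tilde\lambda_\Xgoal^{(n)}$. For the inductive step I would treat the right-hand side of the recursion as a function $F(a,L,\ell) = p^{(k)} L/(1-\ell) + a$ and observe that, for $p^{(k)} \ge 0$, $L \ge 0$ and $\ell < 1$, $F$ is nondecreasing in each of $a$, $L$ and $\ell$ separately. Applying this to the three comparisons $\alpha_\Xgoal^{(k)} \le \tilde\alpha_\Xgoal^{(k)}$ (Lemma~\ref{lemma:alpha-goal-inequality}), $\lambda_\Xgoal^{(k+1)} \le \tilde\lambda_\Xgoal^{(k+1)}$ (induction hypothesis), and $\lambda_\Xloop^{(k)} \le \tilde\lambda_\Xloop^{(k)}$ (Lemma~\ref{lemma:lambda-loop-inequality}) yields $\lambda_\Xgoal^{(k)} \le \tilde\lambda_\Xgoal^{(k)}$ at once. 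The nonnegativity of $\lambda_\Xgoal^{(k+1)}$ and $p^{(k)}$, as probabilities and sums of likelihoods respectively, is what lets the three monotonicities be applied simultaneously rather than one sign at a time.

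The delicate part is the equality clause, and this is where I expect the main obstacle. Equality in $F$ requires equality in all three inputs simultaneously: $\alpha_\Xgoal^{(k)} = \tilde\alpha_\Xgoal^{(k)}$, $\lambda_\Xloop^{(k)} = \tilde\lambda_\Xloop^{(k)}$, and $\lambda_\Xgoal^{(k+1)} = \tilde\lambda_\Xgoal^{(k+1)}$, whereas the hypothesis supplies a single statement, namely that every non-looping goal history from $h_\Xcurr^{(k-1)}$ has been visited; the crux is to show this one condition entails all three. The first follows directly from the equality clause of Lemma~\ref{lemma:alpha-goal-inequality}, whose condition is verbatim the same. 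For the second I would appeal to the decomposition lemma: every non-looping goal history from $h_\Xcurr^{(k-1)}$ is either an element of $H_\Xgoal^{(k)}$ or has the form $h_\Xcurr^{(k-1:k)} \cdot h_\Xgoal$ with $h_\Xgoal \in H_\Xgoal^{(k+1)}$ (the $m=0$ cases), so visiting all of them forces every non-looping goal history from $h_\Xcurr^{(k)}$ to be visited as a suffix, which is exactly the equality condition the induction hypothesis needs for $\lambda_\Xgoal^{(k+1)}$. The genuinely awkward link is $\lambda_\Xloop^{(k)} = \tilde\lambda_\Xloop^{(k)}$, since a once-looping history from $h_\Xcurr^{(k)}$ is not itself a goal history and so is not directly named by the hypothesis; here I would argue that, by the depth-first discipline built into the algorithm, the subtree of histories rooted at $h_\Xcurr^{(k-1)}$ is exhausted precisely when its goal branches are, and that this exhaustion has already recorded every once-looping history from $h_\Xcurr^{(k)}$ in the counters $\alpha_\Xloop^{(k,m)}$, thereby triggering the equality clause of Lemma~\ref{lemma:lambda-loop-inequality}. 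Making this exhaustion argument precise — reconciling the three separate completeness notions against the single one stated in the lemma — is the step I would budget the most care for.
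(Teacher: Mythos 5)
Your treatment of the inequality half is correct and is, in substance, the paper's own argument: the paper disposes of this lemma with the single remark that it ``can be seen'' from Lemmas~\ref{lemma:alpha-goal-inequality} and~\ref{lemma:lambda-loop-inequality}, and your downward induction together with the monotonicity of $F(a,L,\ell)=p^{(k)}L/(1-\ell)+a$ in each argument is exactly the calculation that remark leaves implicit. You also have the direction right: underestimating $\lambda_\Xloop^{(k)}$ shrinks the factor $1/(1-\lambda_\Xloop^{(k)})$, so both sources of error push the estimate to the same (lower) side. Likewise, your first two reductions for the equality clause --- $\alpha_\Xgoal^{(k)}=\tilde\alpha_\Xgoal^{(k)}$ verbatim from Lemma~\ref{lemma:alpha-goal-inequality}, and the induction hypothesis's condition obtained through the decomposition lemma --- are sound.

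The genuine gap is the third link, exactly where you predicted trouble: the bridge from ``every non-looping goal history from $h_\Xcurr^{(k-1)}$ has been visited'' to ``every once-looping history from $h_\Xcurr^{(k)}$ has been visited'' is not valid, and no appeal to depth-first discipline can make it so, because the AND-step may enumerate the outcomes of an action in any order. Concretely, suppose the action at $h_\Xcurr^{(k)}$ has outcome $A$ leading to a loop-free goal subtree and outcome $B$ leading back to $h_\Xcurr^{(k)}$, and $A$ is explored first. At the moment $A$'s subtree is exhausted, every non-looping goal history from $h_\Xcurr^{(k-1)}$ has been visited (any goal history through $B$ repeats $h_\Xcurr^{(k)}$, hence is looping), yet $\alpha_\Xloop^{(k,k)}$ is still $0$, so $\lambda_\Xloop^{(k)}=0<\tilde\lambda_\Xloop^{(k)}$ and therefore $\lambda_\Xgoal^{(k)}<\tilde\lambda_\Xgoal^{(k)}$ strictly whenever $p^{(k)}\tilde\lambda_\Xgoal^{(k+1)}>0$. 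So the equality claim is simply not derivable from the stated hypothesis alone; this is a looseness in the paper's own statement rather than a defect peculiar to your argument. The clause is only ever invoked (in the termination part of the proof of Theorem~\ref{thm:correctness-lgt}) after every at-most-once-looping history has been simulated, at which point the visitation condition of Lemma~\ref{lemma:lambda-loop-inequality} holds as well, all three of your input equalities are available, and your three-way monotonicity argument closes the proof exactly as you set it up. The fix, then, is not a cleverer exhaustion argument but a strengthened hypothesis: state the equality condition as requiring both the goal histories and the once-looping histories to have been visited, and your proof goes through verbatim.
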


In Alg.~\ref{alg:pandor-helper}, the {\sc CalcLambda} function calculates $λ_\Xgoal^{(0)}$ based on $\bm{\alpha}_\Xgoal$, $\bm{\alpha}_\Xloop$, and $h_\Xcurr$, and this $λ_\Xgoal^{(0)}$ serves as the basis for termination in the AND-step.

\subsection{Failing and non-terminating histories}
We can treat \emph{failing histories} (histories that terminate in a non-goal state) and histories that contain a non-decaying loop similarly to goal histories. We account for them via $α_\Xfail^{(k)}$ and $α_\Xnoter^{(k)}$ values in a manner analogous to  $α_\Xgoal^{(k)}$. 

We have similar results for the relevant $λ$ values as before:
\begin{lemma}\label{lemma:lambda-fail-inequality}
$λ_\Xfail^{(k)} \le \tilde\lambda_\Xfail^{(k)}$, with equality when every non-looping failing history from $h_\Xcurr^{(k-1)}$ has been visited.
\end{lemma}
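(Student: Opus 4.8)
The plan is to reuse the proof of Lemma~\ref{lemma:lambda-goal-inequality} essentially verbatim, replacing ``goal'' by ``fail'' throughout, since failing histories are tracked by exactly the same bookkeeping as goal histories. The starting observation is that a \emph{failing} history differs from a goal history only in condition {\bf W2}: one requires $\Xend \notin \Goals$ in place of $\Xend \in \Goals$, while the loop-structure conditions {\bf L1}--{\bf L4} and {\bf W1}, {\bf W3}--{\bf W5} are untouched. Because the decomposition of Lemma~1 never inspects the goal/non-goal status of the terminal state --- it only factors a history into loop segments followed by a final tail --- the same unique factorisation holds for failing histories. Consequently $\tilde\lambda_\Xfail^{(k)}$ satisfies the identical recursion as $\tilde\lambda_\Xgoal^{(k)}$, namely $\tilde\lambda_\Xfail^{(n)} = \tilde\alpha_\Xfail^{(n)}$ and $\tilde\lambda_\Xfail^{(k)} = p^{(k)}\tilde\lambda_\Xfail^{(k+1)}/(1-\tilde\lambda_\Xloop^{(k)}) + \tilde\alpha_\Xfail^{(k)}$, and the algorithm's estimate $\lambda_\Xfail^{(k)}$ obeys the same recursion with the accumulated (visited) quantities $\alpha_\Xfail^{(k)}$ and $\lambda_\Xloop^{(k)}$ substituted for their tilde counterparts.

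I would then argue the inequality by downward induction on $k$, from $k = n$ to $k = 0$. For the base case, $\lambda_\Xfail^{(n)} = \alpha_\Xfail^{(n)} \le \tilde\alpha_\Xfail^{(n)} = \tilde\lambda_\Xfail^{(n)}$, where the middle inequality is the fail-analog of Lemma~\ref{lemma:alpha-goal-inequality} (which holds by the same trivial argument, since $\alpha_\Xfail^{(n)}$ accumulates a subset of the likelihoods summed in $\tilde\alpha_\Xfail^{(n)}$). For the inductive step, I assume $\lambda_\Xfail^{(k+1)} \le \tilde\lambda_\Xfail^{(k+1)}$ and combine it with $\alpha_\Xfail^{(k)} \le \tilde\alpha_\Xfail^{(k)}$ and $\lambda_\Xloop^{(k)} \le \tilde\lambda_\Xloop^{(k)}$ (Lemma~\ref{lemma:lambda-loop-inequality}). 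The conclusion follows from monotonicity: the map $(a,b,c) \mapsto p^{(k)}\,b/(1-c) + a$ is nondecreasing in each argument on the nonnegative domain with $c < 1$, so replacing the algorithm's estimates by the larger tilde quantities can only increase the value, yielding $\lambda_\Xfail^{(k)} \le \tilde\lambda_\Xfail^{(k)}$.

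For the equality clause I would track when each of the three component inequalities is tight. The hypothesis that every non-looping failing history from $h_\Xcurr^{(k-1)}$ has been visited forces $\alpha_\Xfail^{(k)} = \tilde\alpha_\Xfail^{(k)}$ directly, and, via the factorisation of the fail-analog of Lemma~1, it also entails that every non-looping failing history from $h_\Xcurr^{(k')}$ for $k' \ge k$ and every once-looping history feeding $\lambda_\Xloop^{(k')}$ has been visited; this saturates the inductive hypothesis and Lemma~\ref{lemma:lambda-loop-inequality} simultaneously, propagating equality down to level $k$. I expect this equality bookkeeping to be the only nontrivial step: unlike the inequality, which is a mechanical copy of the goal proof, it requires verifying that the fail-version of the decomposition lemma genuinely couples the ``visited'' status of a non-looping failing history to that of its constituent loop segments and its failing tail, so that a single visitation hypothesis suffices to tighten all inputs to the recursion at once.
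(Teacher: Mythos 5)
Your proof is correct and takes essentially the same route as the paper: the paper justifies this lemma purely by analogy with the goal case (``We have similar results for the relevant $\lambda$ values as before''), i.e., the fail-analogs of Lemmas \ref{lemma:alpha-goal-inequality}, \ref{lemma:lambda-loop-inequality} and \ref{lemma:lambda-goal-inequality} obtained by replacing the terminal condition $\Xend(h) \in \Goals$ with $\Xend(h) \notin \Goals$ in {\bf W2}, which is exactly the substitution you carry out. Your explicit statement of the recursion, the monotone downward induction, and the equality bookkeeping is, if anything, more detailed than what the paper itself supplies for this step.
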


Calculating $λ_\Xnoter^{(k)}$ is peculiar in that multiple decaying loops can add up to a a history that cannot be extended into a terminating history (Fig.~\ref{fig:desiderata}.3).
When no history from $h_\Xcurr^{(k)}$ terminates, $\tilde{λ}_\Xloop^{(k+1)} + \tilde{λ}_\Xnoter^{(k)} = 1$ – for example, $h_\Xcurr^{(k)}$ has an extension with likelihood $0.1$ with a non-decaying loop at the end, and another extension with likelihood $0.9$ loops back to $h_\Xcurr^{(k)}$.
When this happens, the values of $α_\Xloop^{(k:,k:)}$ are zeroed out, and $λ_\Xnoter^{(k)}$ is assigned $p^{(k)}$. (Lines~\ref{algline:loop-noter-begin}--\ref{algline:loop-noter-end}.) With this caveat, the inequality result for $λ_\Xnoter$ is the following:
\begin{lemma}\label{lemma:lambda-noter-inequality}
$\smash{λ_\Xnoter^{(k)} \le \tilde\lambda_\Xnoter^{(k)}}$, with equality when every once-looping history from $h_\Xcurr^{(k-1)}$ has been visited.
\end{lemma}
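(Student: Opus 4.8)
The plan is to mirror, almost verbatim, the backward-induction argument that establishes Lemmas~\ref{lemma:lambda-goal-inequality} and~\ref{lemma:lambda-fail-inequality}, and to isolate the genuinely new ingredient — the \textbf{D3} handling of combined decaying loops — as a separate case. First I would record the recursion that the true quantity $\tilde\lambda_\Xnoter^{(k)}$ satisfies. In analogy with the $\tilde\lambda_\Xgoal$ recursion, in the non-degenerate regime ($\tilde\lambda_\Xloop^{(k)} < 1$) we have
\[
\tilde\lambda_\Xnoter^{(k)} = \frac{p^{(k)}\,\tilde\lambda_\Xnoter^{(k+1)}}{1-\tilde\lambda_\Xloop^{(k)}} + \tilde\alpha_\Xnoter^{(k)},
\]
together with the degenerate identity $\tilde\lambda_\Xnoter^{(k)} = p^{(k)}$ in the case that all outgoing mass from $h_\Xcurr^{(k)}$ is non-terminating, i.e.\ when the loop-mass and the escaping-non-termination-mass sum to $1$. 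The \textsc{CalcLambda} routine computes $\lambda_\Xnoter^{(k)}$ by the very same recursion applied to its running estimates $\lambda_\Xnoter^{(k+1)}$, $\lambda_\Xloop^{(\cdot)}$ and $\alpha_\Xnoter^{(\cdot)}$, except that on lines~\ref{algline:loop-noter-begin}--\ref{algline:loop-noter-end} it detects the degenerate situation, zeroes the relevant $\alpha_\Xloop^{(k:,k:)}$, and sets $\lambda_\Xnoter^{(k)} := p^{(k)}$.

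With this in hand I would prove $\lambda_\Xnoter^{(k)} \le \tilde\lambda_\Xnoter^{(k)}$ by downward induction on $k$ from $k=n$. The base case $k=n$ is just the $\Xnoter$-analogue of Lemma~\ref{lemma:alpha-goal-inequality}, $\alpha_\Xnoter^{(n)} \le \tilde\alpha_\Xnoter^{(n)}$, since $\lambda_\Xnoter^{(n)} = \alpha_\Xnoter^{(n)}$ and $\tilde\lambda_\Xnoter^{(n)} = \tilde\alpha_\Xnoter^{(n)}$. In the non-degenerate inductive step I would invoke the monotonicity of the map $(x,y,z)\mapsto p^{(k)}y/(1-x)+z$, which is non-decreasing in each of $x\in[0,1)$, $y\ge 0$, $z\ge 0$. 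Feeding in $\lambda_\Xloop^{(k)} \le \tilde\lambda_\Xloop^{(k)}$ (Lemma~\ref{lemma:lambda-loop-inequality}), the inductive hypothesis $\lambda_\Xnoter^{(k+1)} \le \tilde\lambda_\Xnoter^{(k+1)}$, and $\alpha_\Xnoter^{(k)} \le \tilde\alpha_\Xnoter^{(k)}$ then yields the claim exactly as for the goal and fail quantities.

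The hard part will be the degenerate case, which is where I expect the main obstacle to sit, because the algorithm must never trigger the \textbf{D3} branch prematurely: doing so would \emph{over}estimate $\lambda_\Xnoter^{(k)}$ and break the inequality. The key observation is that the trigger is an \emph{exact} equality — the accumulated loop-mass and escaping-non-termination-mass (built from already-finalised deeper estimates) sum to exactly $1$, and every such summand is an exact product of transition probabilities, hence an underestimate of its tilde-counterpart. Since the two corresponding true masses measure disjoint portions of the outgoing probability and so sum to at most $1$, the chain $1 = (\text{estimate sum}) \le (\text{true sum}) \le 1$ pinches every inequality to equality. This certifies that the genuine degenerate condition holds, so $\tilde\lambda_\Xnoter^{(k)} = p^{(k)}$, and the assignment $\lambda_\Xnoter^{(k)} := p^{(k)}$ in fact achieves \emph{equality} rather than mere inequality. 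I would then check that zeroing $\alpha_\Xloop^{(k:,k:)}$ cannot spuriously inflate any $\lambda_\Xloop$ or $\lambda_\Xnoter$ at shallower indices: those entries are replaced by $0$, still below their true values, so the monotonicity argument of the previous paragraph continues to apply verbatim to levels $<k$ and the inductive inequality is preserved.

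Finally, the equality rider follows by tracking the equality conditions of the ingredient lemmas. Once every once-looping history from $h_\Xcurr^{(k-1)}$ has been visited, the $\Xnoter$-analogue of Lemma~\ref{lemma:alpha-goal-inequality} and Lemma~\ref{lemma:lambda-loop-inequality} hold with equality, the algorithm's recursion and the true recursion coincide input-for-input at every level (including a consistent choice of degenerate vs.\ non-degenerate branch), and the induction delivers $\lambda_\Xnoter^{(k)} = \tilde\lambda_\Xnoter^{(k)}$. I anticipate that the only genuinely delicate bookkeeping is aligning the indices in the zeroing-out step so that the degenerate assignment at level $k$ and the recursive updates at levels $<k$ remain mutually consistent.
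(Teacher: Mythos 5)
You should first note that the paper itself never writes out a proof of this lemma: it is asserted bare, immediately after the prose describing the degenerate-case handling (zeroing $α_\Xloop^{(k:,k:)}$ and assigning $λ_\Xnoter^{(k)} \gets p^{(k)}$), and its sibling inequalities are dispatched with ``proved easily by induction on $k$.'' Your proposal is correct and is essentially the formalization the paper leaves implicit: the downward induction from $k=n$ using monotonicity of $(x,y,z)\mapsto p^{(k)}y/(1-x)+z$, fed with Lemma~\ref{lemma:lambda-loop-inequality} and the $\Xnoter$-analogue of Lemma~\ref{lemma:alpha-goal-inequality}, is exactly the argument pattern behind Lemmas~\ref{lemma:lambda-goal-inequality} and~\ref{lemma:lambda-fail-inequality}. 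What you supply beyond the paper is the justification that the \textbf{D3} branch can never fire prematurely: your pinching argument --- the trigger is an exact equality, the algorithm's masses are sums of likelihoods of visited, pairwise disjoint histories and hence underestimates, and the true masses sum to at most $1$, so detecting the value $1$ forces every inequality to equality --- is precisely the missing reason why the special assignment and the zeroing of $α_\Xloop^{(k:,k:)}$ cannot push any $λ$ above its tilde counterpart. One point of caution: the degenerate identity should retain the escaping mass, i.e.\ $\tilde\lambda_\Xnoter^{(k)} = p^{(k)} + \tilde\alpha_\Xnoter^{(k)}$ (the non-degenerate recursion with its through-$h_\Xcurr^{(k)}$ term saturated at $p^{(k)}$), because non-terminating histories from $h_\Xcurr^{(k-1)}$ that avoid $h_\Xcurr^{(k)}$ altogether are unaffected by the degeneracy at $h_\Xcurr^{(k)}$; the paper's own prose (``$λ_\Xnoter^{(k)}$ is assigned $p^{(k)}$'') is loose in the same way, but your equality rider only goes through if the $α_\Xnoter^{(k)}$ contribution is added on top of $p^{(k)}$ rather than replaced by it.
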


We now have classified histories as those that could be extended into terminating ones (either in a goal state or not) and those that have a non-decaying loop.
\begin{lemma}\label{lemma:lambda-lter}
        $\smash{\tilde{λ}_\Xgoal^{(0)} + \tilde{λ}_\Xfail^{(0)} = \LTER = 1 - \tilde{λ}_\Xnoter^{(0)}}$
\end{lemma}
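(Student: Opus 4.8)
The plan is to read $\tilde\lambda_\Xgoal^{(0)}$, $\tilde\lambda_\Xfail^{(0)}$ and $\tilde\lambda_\Xnoter^{(0)}$ as the probabilities of the three mutually exclusive and jointly exhaustive fates of a maximal run started at $h_\Xcurr^{(0)}$ --- terminating in a goal state, terminating in a non-goal state, and never terminating --- so that the lemma reduces to two elementary facts: (i) the terminating runs are exactly the disjoint union of the goal and the failing runs, and (ii) the three fates carry total probability $1$. For the first equality I would argue straight from the definitions. A terminating history has $a^{(T)} = \Xstop$, and it is a goal history precisely when $s^{(T)} \in \Goals$; hence the set of terminating histories is the disjoint union of the goal histories and the failing histories, and summing $\ell(\cdot)$ over this partition gives $\LTER = \LGT + \sum_{h\text{ failing}}\ell(h)$. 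Lemma~\ref{lemma:LGT-lambda-goal} identifies the first summand as $\tilde\lambda_\Xgoal^{(0)}$, and the verbatim analogue of that lemma --- its derivation uses condition W2 only to select goal histories, so replacing $\in\Goals$ by $\notin\Goals$ and $\Xgoal$ by $\Xfail$ throughout yields the failing version --- identifies the second summand as $\tilde\lambda_\Xfail^{(0)}$. This establishes $\tilde\lambda_\Xgoal^{(0)} + \tilde\lambda_\Xfail^{(0)} = \LTER$.

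For the second equality I would invoke conservation of probability on the absorbing Markov chain over combined states that $\langle\Env,C\rangle$ induces, in which a run is absorbed exactly when the controller issues $\Xstop$. Writing the termination time as a random variable $\tau$, the event $\{\tau<\infty\}$ has probability $\LTER$ and its complement, the non-terminating runs, has probability $1 - \LTER$; so the second equality is equivalent to the claim that $\tilde\lambda_\Xnoter^{(0)}$ equals this non-termination probability. I would prove that claim by the inductive scheme underlying Lemma~\ref{lemma:LGT-lambda-goal}, classifying each continuation from $h_\Xcurr^{(k-1)}$ according to whether it revisits the prefix $h_\Xcurr^{(:k-1)}$, diverges immediately, or first advances to $h_\Xcurr^{(k)}$ and repeats a loop a geometric number of times before diverging; this produces for $\tilde\lambda_\Xnoter$ a recurrence of the same shape as the one for $\tilde\lambda_\Xgoal$. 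At $k=0$ the prefix $h_\Xcurr^{(:-1)} = \langle\rangle$ is empty, so the ``revisit'' branch is vacuous, the three fates exhaust all runs, and their probabilities sum to $1$: this gives $\tilde\lambda_\Xgoal^{(0)} + \tilde\lambda_\Xfail^{(0)} + \tilde\lambda_\Xnoter^{(0)} = 1$ and hence $\LTER = 1 - \tilde\lambda_\Xnoter^{(0)}$.

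The main obstacle is the non-termination bookkeeping forced by \textbf{D3}: a run can fail to terminate not through a single non-decaying loop but because several individually decaying loops combine to leave no escaping continuation (Fig.~\ref{fig:desiderata}.3). The delicate regime is the boundary case where no continuation from $h_\Xcurr^{(k)}$ terminates: there the loop mass saturates to $\tilde\lambda_\Xloop^{(k+1)} + \tilde\lambda_\Xnoter^{(k)} = 1$, and the geometric factor $1/(1-\tilde\lambda_\Xloop^{(k)})$ appearing in the recurrences threatens to diverge. I would handle this as a limiting case of the recurrence and verify that all of the incoming mass $p^{(k)}$ is reassigned to non-termination rather than to a nonexistent terminating continuation --- exactly the step realized algorithmically by zeroing $\alpha_\Xloop^{(k:,k:)}$ and setting $\lambda_\Xnoter^{(k)} \gets p^{(k)}$ --- so that no probability is created or lost in passing from level $k+1$ to level $k$. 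Confirming that this saturation step conserves mass is what makes the identity $\tilde\lambda_\Xgoal^{(0)} + \tilde\lambda_\Xfail^{(0)} + \tilde\lambda_\Xnoter^{(0)} = 1$ hold with no leakage, and it is the crux of the lemma.
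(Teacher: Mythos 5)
Your proposal is correct and takes essentially the same approach the paper intends: the paper states this lemma without an explicit proof, relying on exactly the trichotomy you formalize (goal-terminating, fail-terminating, and non-terminating runs are mutually exclusive and exhaustive, giving $\tilde{\lambda}_{\Xgoal}^{(0)} + \tilde{\lambda}_{\Xfail}^{(0)} = \LTER$ by the disjoint-union argument and $\LTER = 1 - \tilde{\lambda}_{\Xnoter}^{(0)}$ by conservation of probability). Your explicit treatment of the saturation case of Fig.~\ref{fig:desiderata}.3, where several decaying loops combine into non-termination, only makes rigorous what the paper leaves implicit in its handling of $\tilde{\lambda}_{\Xnoter}$.
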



\subsection{Rolling up the $\alpha$ values}
Next, we see what should happen to the $\smash{\alpha_{\parm}^{(\parm)}}$ values when the current history changes. Clearly, when $h_\Xcurr$ is extended, $α_\Xgoal$ should be extended with an additional zero item, and $α_\Xloop$ with an additional row\&column of zeros (lines \hbox{\ref{algline:alpha-goal-gets-zero}--\ref{algline:alpha-loop-gets-zero}}).

When $h_\Xcurr$ is shortened when the {\sc AND-step} function returns, the last element (or last row\&column) of these variables needs to be integrated to the previous ones before deleting them (\hbox{\sc CumulateAlpha} function at lines~\ref{algline:and-step-call-cumulate},~\ref{algline:cumulate-alpha}).
The approach is similar to how the first iteration of $λ_\Xgoal$ and $λ_\Xloop$ was calculated: in fact, the new $α$ values are chosen so that {\sc CalcLambda} returns the same values before and after {\sc CumulateAlpha} is called.
It is important to note that this change in the $α$ values doesn't affect our earlier results:
\begin{lemma}
    After $\smash{α_{\parm}^{(\parm)}}$ is assigned the values returned by \hbox{\sc CumulateAlpha}, Lemmas \ref{lemma:lambda-loop-inequality}, \ref{lemma:lambda-goal-inequality}, \ref{lemma:lambda-fail-inequality}, \ref{lemma:lambda-noter-inequality} still hold.
\end{lemma}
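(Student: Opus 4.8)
The plan is to lean on the defining property of \textsc{CumulateAlpha}: the new $\alpha$ values are chosen precisely so that \textsc{CalcLambda}, run on the shortened history, reproduces the same $\lambda$ values it produced on the longer one. Write $n$ for the length of $h_\Xcurr$ immediately before the \textsc{AND-step} returns, so that the fold integrates the index-$n$ entries of the $\alpha$ arrays into index $n-1$ and then deletes them, leaving a history of length $n-1$. I would split the claim into two invariances and then chain them. First, the \emph{true} quantities $\tilde\lambda_\bullet^{(k)}$, for $\bullet \in \{\Xloop,\Xgoal,\Xfail,\Xnoter\}$ and every surviving index $k \le n-1$, are unchanged by the shortening. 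Second, the \emph{computed} quantities $\lambda_\bullet^{(k)}$ at those same indices are unchanged by the fold. Granting both, each inequality $\lambda_\bullet^{(k)} \le \tilde\lambda_\bullet^{(k)}$ held before the call — by Lemmas~\ref{lemma:lambda-loop-inequality}, \ref{lemma:lambda-goal-inequality}, \ref{lemma:lambda-fail-inequality}, and~\ref{lemma:lambda-noter-inequality} applied to the length-$n$ history — and since neither side moves, it persists for the length-$(n-1)$ history.

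For the first invariance I would argue semantically rather than through the $\tilde\lambda$ recursion. Each $\tilde\lambda_\bullet^{(k)}$ is a probability of terminating (in a goal state, in a non-goal state, or never) while continuing from the prefix $h_\Xcurr^{(:k-1)}$ without revisiting a combined state of that prefix, and is a sum over the history sets $H_\bullet^{(k)}$ whose conditions \textbf{L1}--\textbf{L4} and \textbf{W1}--\textbf{W5} reference only indices $i < k$ together with $h_\Xcurr^{(k)}$. When the fold shortens $h_\Xcurr$ from length $n$ to $n-1$, only the final combined state $h_\Xcurr^{(n)}$ is discarded; every prefix $h_\Xcurr^{(:k)}$ with $k \le n-1$ is untouched. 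Hence each set $H_\bullet^{(k)}$ with $k \le n-1$, and therefore each $\tilde\alpha_\bullet^{(k)}$ and $\tilde\lambda_\bullet^{(k)}$, denotes literally the same object before and after the fold. This fixes the right-hand side of every inequality.

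The second invariance is the algebraic heart of the proof and the step I expect to be the main obstacle. The fold touches the index-$n$ row and column of the matrix $\alpha_\Xloop^{(\cdot,\cdot)}$ and the index-$n$ entry of the vectors $\alpha_\Xgoal$, $\alpha_\Xfail$, $\alpha_\Xnoter$, redistributing them into index $n-1$; the assertion to verify is that, after this redistribution, \textsc{CalcLambda} terminating at $n-1$ returns exactly the $\lambda_\bullet^{(k)}$ that the length-$n$ recursion returned for every $k \le n-1$. The natural route is to show that the fold is the value-preserving collapse of one level of the \textsc{CalcLambda} recursions: the $m=n$ summand of the $\lambda_\Xloop^{(k)}$ expansion, which carries the geometric weight $p^{(k+1)}\cdots p^{(n)}\big(1-\lambda_\Xloop^{(n)}\big)^{-1}$, must be absorbed into a reweighted $m=n-1$ summand, and the deepest goal/fail/noter contributions to $\lambda_\bullet^{(n-1)}$ — carried through the transition probability $p^{(n-1)}$ and the loop factor $\big(1-\lambda_\Xloop^{(n-1)}\big)^{-1}$ — must be re-expressed through the new entries $\alpha_\bullet^{(n-1)}$ and the $(n-1)$-row/column of $\alpha_\Xloop$. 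Checking that the formula of \textsc{CumulateAlpha} (line~\ref{algline:cumulate-alpha}) is exactly this collapse — in particular that it reproduces the $\big(1-\lambda_\Xloop\big)^{-1}$ factors and the two-index structure of $\alpha_\Xloop^{(k,m)}$ — is a direct but bookkeeping-heavy computation; it is where all the real work sits.

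Finally, the equality clauses come along for free from the same two invariances plus the \emph{exactness} of the fold. Because \textsc{CumulateAlpha} reproduces \textsc{CalcLambda}'s outputs identically rather than as a one-sided bound, the histories already tallied in the $\alpha$ arrays are re-indexed without loss: a visited goal, loop, fail, or noter history through index $n$ is, after the fold, counted within the index-$(n-1)$ entries. Consequently the hypothesis ``every relevant history from $h_\Xcurr^{(k-1)}$ has been visited'' still forces $\alpha_\bullet^{(k)} = \tilde\alpha_\bullet^{(k)}$ and hence $\lambda_\bullet^{(k)} = \tilde\lambda_\bullet^{(k)}$, exactly as in the four cited lemmas. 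The noter case additionally inherits the special handling recorded for Lemma~\ref{lemma:lambda-noter-inequality} — zeroing the $\alpha_\Xloop$ block and setting $\lambda_\Xnoter^{(k)} = p^{(k)}$ when a combination of decaying loops cannot terminate — and since that substitution is applied inside the same recomputed recursion, it too survives the fold.
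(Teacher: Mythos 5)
Your proof matches the paper's own justification: the paper states this lemma without a separate proof, relying exactly on the property you identify --- that \textsc{CumulateAlpha} is defined so that \textsc{CalcLambda} returns the same values before and after the fold --- together with the observation that the true quantities $\tilde{\lambda}_{\bullet}^{(k)}$ at surviving indices depend only on the untouched prefix $h_\Xcurr^{(:k)}$. Your two-invariance decomposition (including the equality clauses, which persist because the set of visited histories is unchanged by this bookkeeping step) is the same argument made explicit, and the one step you defer --- checking that the pseudocode's formulas actually realize the \textsc{CalcLambda} invariance --- is precisely what the paper also takes as given by construction.
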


\subsection{Correctness of the search}
Our final theorem states that {\sc Pandor} meets desiderata {\bf D1}--{\bf D3}, and solves Problem~\ref{problem:lgt} correctly. 

\begin{theorem}\label{thm:correctness-lgt}
    Given a planning problem $\Prob$, integer $N$, and $\LGTdesired \in (0,1)$, the search algorithm {\sc Pandor} is \emph{sound} and \emph{complete:}
    every FSC $C$ returned by {\sc Pandor-synth} is $N\!$-bounded and $\LGT \ge \LGTdesired$ for $\Prob$, and
    if there exists an $N\!$-bounded controller that is $\LGT \ge \LGTdesired$ for $\Prob$, then one such FSC will be found.
\end{theorem}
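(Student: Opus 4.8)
The plan is to prove the two halves—soundness and completeness—separately, exploiting the sandwich $\LGTlower \le \LGT \le \LGTupper$ that the preceding lemmas set up, where $\LGTlower := \lambda_\Xgoal^{(0)}$ and $\LGTupper := 1 - \lambda_\Xfail^{(0)} - \lambda_\Xnoter^{(0)}$ are the running bounds maintained by {\sc Pandor}.

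\emph{Soundness.} First I would note that every returned controller is $N$-bounded by construction: the non-deterministic branch in the OR-step only introduces a successor $q' \in \{0, \ldots, N-1\}$, so the reachable part of any synthesized controller has at most $N$ states. Since {\sc CalcLambda} returns $\lambda_\Xgoal^{(0)}$ and the AND-step bases its success test on it, {\sc Pandor} returns a controller only when $\LGTlower \ge \LGTdesired$. Chaining Lemma~\ref{lemma:lambda-goal-inequality} ($\lambda_\Xgoal^{(0)} \le \tilde\lambda_\Xgoal^{(0)}$) with Lemma~\ref{lemma:LGT-lambda-goal} ($\LGT = \tilde\lambda_\Xgoal^{(0)}$) yields $\LGTlower \le \LGT$, whence $\LGT \ge \LGTlower \ge \LGTdesired$. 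This settles soundness.

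\emph{Completeness.} The heart of the argument is the matching upper bound. By Lemma~\ref{lemma:lambda-lter} the true value is $\LGT = 1 - \tilde\lambda_\Xfail^{(0)} - \tilde\lambda_\Xnoter^{(0)}$, and Lemmas~\ref{lemma:lambda-fail-inequality} and~\ref{lemma:lambda-noter-inequality} give $\lambda_\Xfail^{(0)} \le \tilde\lambda_\Xfail^{(0)}$ and $\lambda_\Xnoter^{(0)} \le \tilde\lambda_\Xnoter^{(0)}$, so $\LGTupper \ge \LGT$. The crucial observation is that $\LGTupper$ bounds not just the $\LGT$ of the controller currently under construction but the $\LGT$ of \emph{every} extension of it: each failing history (terminating in a non-goal state) and each non-terminating history (ending in a non-decaying loop) recorded in $\alpha_\Xfail$ and $\alpha_\Xnoter$ uses only transitions already committed along the current search path, and therefore stays failing, resp.\ non-terminating, under any further extension. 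Hence the computed $\lambda_\Xfail^{(0)}$ and $\lambda_\Xnoter^{(0)}$ are lower bounds on the corresponding quantities for every completion, $\LGTupper$ can only decrease as transitions are added, and the current value upper-bounds $\LGT$ for all extensions. With this, completeness follows from exhaustiveness: the OR-step's branch enumerates every (action, successor-state) pair, so depth-first search with backtracking visits every $N$-bounded controller, and the sole pruning rule—reverting when $\LGTupper < \LGTdesired$—is safe, since no extension of a reverted partial controller can reach $\LGTdesired$. Given an $N$-bounded $C^\star$ with $\LGT(C^\star) \ge \LGTdesired$, consider the search path that builds $C^\star$ incrementally: every partial controller on it has $C^\star$ as an extension, so $\LGTupper \ge \LGT(C^\star) \ge \LGTdesired$ and no reversion occurs along it. The search must therefore either succeed earlier on some admissible controller or pursue this path until all reachable histories of $C^\star$ have been simulated, at which point the equality clause of Lemma~\ref{lemma:lambda-goal-inequality} forces $\LGTlower = \LGT(C^\star) \ge \LGTdesired$ and {\sc Pandor} returns success rather than global failure. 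This solves Problem~\ref{problem:lgt}.

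The main obstacle is the loop bookkeeping behind $\lambda_\Xnoter$ demanded by \textbf{D3}, where several individually decaying loops combine into histories that never terminate. Establishing that $\LGTupper$ is a genuine upper bound over all extensions hinges on two facts I would verify with care: the zeroing-out of $\alpha_\Xloop^{(k:,k:)}$ together with the assignment $\lambda_\Xnoter^{(k)} \gets p^{(k)}$ triggered exactly when $\tilde\lambda_\Xloop^{(k+1)} + \tilde\lambda_\Xnoter^{(k)} = 1$, and the invariance of Lemmas~\ref{lemma:lambda-loop-inequality}--\ref{lemma:lambda-noter-inequality} under {\sc CumulateAlpha}. These are precisely what guarantee that $\lambda_\Xnoter^{(0)}$ still lower-bounds $\tilde\lambda_\Xnoter^{(0)}$ across the history contractions performed whenever the AND-step returns, and hence that the safety of pruning survives the rolling-up of the $\alpha$ values.
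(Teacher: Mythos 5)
Your proposal is correct and takes essentially the same approach as the paper: soundness by chaining Lemmas~\ref{lemma:LGT-lambda-goal} and~\ref{lemma:lambda-goal-inequality}, and completeness by observing that failing and non-terminating histories persist under controller extension, so $1-\lambda_\Xfail^{(0)}-\lambda_\Xnoter^{(0)}$ upper-bounds the $\LGT$ of every completion (the paper's $\ddagger$/$\star$ chain), which makes the pruning rule safe and the exhaustive backtracking search complete. The only element the paper spells out that you leave implicit is why exploration of each controller terminates---finiteness of combined states bounds the number and length of the at-most-once-looping histories, hence the number of OR-steps---but this does not change the substance of the argument.
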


\begin{proof}[Proof sketch]
\textbf{Soundness.}
A controller $C$ is returned by {\sc Pandor-synth} only if $\LGTdesired \le λ_\Xgoal^{(0)}$ \\ (line~2.\ref{algline:and-goal-condition}).
By Lemmas \ref{lemma:LGT-lambda-goal}~and~\ref{lemma:lambda-goal-inequality}:
\begin{equation}
    λ_\Xgoal^{(0)} ≤ \tilde{λ}_\Xgoal^{(0)} = \LGT,
\end{equation}
making the controller $\LGTdesired ≤ \LGT$ for $\Prob$.

\textbf{Completeness.}
Suppose there exists an $N\!$-bounded controller $C_{\Xgood}$ which is $\LGT ≥ \LGTdesired$ for $\Prob$.

Suppose a smaller controller $C' \prec C_\Xgood$ is rejected (property $\dagger$).
A failing or non-terminating history of $C'$ has the same property for $C_\Xgood$ as well, and is a valid history for the system $⟨\Env, C_\Xgood⟩$ (property $\ddagger$).
A failing or non-terminating history does not terminate in a goal state (property $\star$).
\begin{align}\label{eq:completeness-lgt}
\LGT&(C_\Xgood) ≤ & \\
  &≤ 1 - \tilde{λ}_\Xfail^{(0)}(C_\Xgood) - \tilde{λ}_\Xnoter^{(0)}(C_\Xgood) &\quad\text{by $\star$} \notag\\
  &≤ 1 - \tilde{λ}_\Xfail^{(0)}(C') - \tilde{λ}_\Xnoter^{(0)}(C') &\quad\text{by $\ddagger$} \notag \\
  &≤ 1 - λ_\Xfail^{(0)}(C') - λ_\Xnoter^{(0)}(C') &\quad\text{Lemma \ref{lemma:lambda-fail-inequality}, \ref{lemma:lambda-noter-inequality}} \notag \\
  &< \LGTdesired &\quad\text{by $\dagger$} \notag
\end{align}
This is against the premise that $C_\Xgood$ is $\LGT ≥ \LGTdesired$, contradiction: no smaller controller is rejected.

Suppose that when the current controller is $C' \prec C_\Xgood$, at a non-deterministic choice the next controller $C''$ is such that $C' \prec C'' \not\preceq C_\Xgood$.
If this execution branch of $C''$ does not fail, then a controller was returned, which was $\LGT ≥ \LGTdesired$ by the soundness of the search.

In a finite environment, a system with an FSC has finitely many combined states. This implies that the number and length of the at-most-once-looping histories is bounded above, and so is the number of OR-steps required to explore these histories. At the end of the execution, every such history of $C$ has been simulated, resulting in $λ_\Xgoal^{(0)} + λ_\Xfail^{(0)} +\penalty-100 λ_\Xnoter^{(0)} = 1$ by Lemmas \ref{lemma:lambda-goal-inequality}, \ref{lemma:lambda-fail-inequality}, \ref{lemma:lambda-noter-inequality}, and~\ref{lemma:lambda-lter}. At this time one of the termination conditions is fulfilled.

If at every non-deterministic choice, $C''$ is chosen such that $C' \prec C'' \preceq C_\Xgood$, then as $C''$ can't be rejected (by the argument above), either $C''$ or an extension of it will be returned.
\end{proof}

\begin{figure}[t]\centering
    \tikz{\input{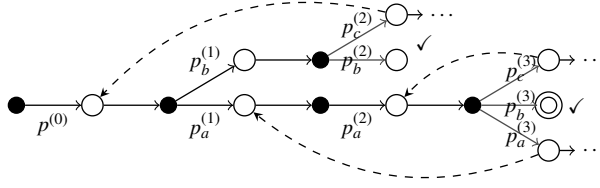}}
	 \caption{An example AND-OR tree corresponding to the execution of {\sc Pandor}, with the root at the left. Numbers on the edges are the transition probabilities; filled black circles: AND node; empty circles: OR node; a checkmark: terminating in a goal state; a dashed arrow indicates that the relevant states are equal in a looping history, with an infinite tree below.
	  When all nodes are explored and the double circle is the current node, the non-zero $α$ values are $α_\Xgoal^{(1)} = p_b^{(1)} p_b^{(2)}$,\ \ $α_\Xgoal^{(3)} = p_b^{(3)}$,\ \  $α_\Xloop^{(0,0)} = p_b^{(1)} p_c^{(2)}$,\ \  $α_\Xloop^{(1,3)} = p_a^{(2)} p_a^{(3)}$,\ \  $α_\Xloop^{(2,3)} = p_c^{(3)}$.}
	    \label{fig:loops-in-loops}
		\end{figure}


\subsection{Planning for minimum likelihood of termination}

The above results outline how our algorithm can plan for a minimum $\LGT$, but only minor modifications are required for setting a lower bound on $\LTER$ as well. Formally, 

\begin{problem}\label{problem:lter-and-lgt}
Given a planning problem $\Prob$, an integer $N$, $\LTERdesired \in (0,1)$, $\LGTdesired \in (0,1)$, find a finite-state controller with at most $N$ states that is $\LTER ≥ \LTERdesired$ and $\LGT ≥ \LGTdesired$ for $\Prob$.
\end{problem}

To solve this, the only change required in the pseudocode is in lines~\ref{algline:and-goal-condition}--\ref{algline:and-goal-condition-end}, where we simply extend the criteria for early termination and failure. These changes are shown in Alg.~\ref{alg:pandor-lter}.

\begin{algorithm}[ht]
\begin{algorithmic}[1]
\Require ${\Prob}$, $N$, $\LTERPCdesired$: as before,
\Statex \hspace{5ex}$\LTERdesired$: the desired minimum $\LTER$

\setcounter{ALG@line}{10}
\Statex
\State {\bf if} $λ_{\Xgoal} \ge \LTERPCdesired$ and $λ_{\Xgoal} + λ_{\Xfail} \ge \LTERdesired$
\State \hspace{1.5em} \textbf{return} $C$
\State {\bf else if} $1 - λ_{\Xfail} - λ_{\Xnoter}\! <\! \LTERPCdesired$ or $1 - λ_{\Xnoter} \!<\! \LTERdesired$
\State \hspace{1.5em} \textbf{fail} this non-deterministic branch
\State {\bf end if}
\end{algorithmic}
\caption{Changes required in Algorithm~2 to specify a lower bound on both $\LTER$ and $\LTERPC$.}\label{alg:pandor-lter}
\end{algorithm}

This new search process is sound and complete with respect to Problem~\ref{problem:lter-and-lgt}.
\begin{theorem}
    Given a planning problem $\Prob$, integer $N$, $\LTERdesired \in (0,1)$, and $\LGTdesired \in (0,1)$, the search algorithm {\sc Pandor} is \emph{sound} and \emph{complete}:
    every FSC $C$ returned by {\sc Pandor-synth} is $N\!$-bounded and $\LTER \ge \LTERdesired$ and $\LGT \ge \LGTdesired$ for $\Prob$, and
    if there exists an $N\!$-bounded controller that is $\LTER \ge \LTERdesired$ and $\LGT \ge \LGTdesired$ for $\Prob$, then one such FSC will be found.
\end{theorem}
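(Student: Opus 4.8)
The plan is to reduce this statement to the already-established Theorem~\ref{thm:correctness-lgt}, since the only algorithmic difference is the strengthened early-termination and failure tests of Alg.~\ref{alg:pandor-lter} (lines 11 and 13). I would mirror the two halves of the proof sketch of Theorem~\ref{thm:correctness-lgt}, adding in each case the extra conjunct arising from the $\LTER$ bound.

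For soundness, a controller $C$ is returned only when both $\lambda_\Xgoal^{(0)} \ge \LGTdesired$ and $\lambda_\Xgoal^{(0)} + \lambda_\Xfail^{(0)} \ge \LTERdesired$ hold. The first conjunct yields $\LGT \ge \LGTdesired$ exactly as before, via Lemmas~\ref{lemma:LGT-lambda-goal} and~\ref{lemma:lambda-goal-inequality} (so that $\lambda_\Xgoal^{(0)} \le \tilde\lambda_\Xgoal^{(0)} = \LGT$). For the second, I would combine Lemma~\ref{lemma:lambda-lter}, which gives $\LTER = \tilde\lambda_\Xgoal^{(0)} + \tilde\lambda_\Xfail^{(0)}$, with Lemmas~\ref{lemma:lambda-goal-inequality} and~\ref{lemma:lambda-fail-inequality}, to obtain
\[
\LTERdesired \le \lambda_\Xgoal^{(0)} + \lambda_\Xfail^{(0)} \le \tilde\lambda_\Xgoal^{(0)} + \tilde\lambda_\Xfail^{(0)} = \LTER,
\]
so a returned controller satisfies both bounds simultaneously.

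For completeness, I would again fix a good controller $C_\Xgood$ (now with both $\LTER \ge \LTERdesired$ and $\LGT \ge \LGTdesired$ for $\Prob$) and show that no smaller controller $C' \prec C_\Xgood$ is rejected. The failure test now fails a branch when $1 - \lambda_\Xfail^{(0)}(C') - \lambda_\Xnoter^{(0)}(C') < \LGTdesired$ or $1 - \lambda_\Xnoter^{(0)}(C') < \LTERdesired$, so I must rule out both disjuncts. The first is handled verbatim by the chain in~\eqref{eq:completeness-lgt}, using properties $\star$ and $\ddagger$ together with Lemmas~\ref{lemma:lambda-fail-inequality} and~\ref{lemma:lambda-noter-inequality}. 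The new ingredient is the second disjunct: applying Lemma~\ref{lemma:lambda-lter} to $C_\Xgood$ (so $\LTER(C_\Xgood) = 1 - \tilde\lambda_\Xnoter^{(0)}(C_\Xgood)$), then property $\ddagger$ (every non-terminating history of $C'$ is also one of $C_\Xgood$, whence $\tilde\lambda_\Xnoter^{(0)}(C') \le \tilde\lambda_\Xnoter^{(0)}(C_\Xgood)$), and finally Lemma~\ref{lemma:lambda-noter-inequality} ($\lambda_\Xnoter^{(0)}(C') \le \tilde\lambda_\Xnoter^{(0)}(C')$), I would chain
\[
1 - \lambda_\Xnoter^{(0)}(C') \ge 1 - \tilde\lambda_\Xnoter^{(0)}(C') \ge 1 - \tilde\lambda_\Xnoter^{(0)}(C_\Xgood) = \LTER(C_\Xgood) \ge \LTERdesired,
\]
which contradicts the second disjunct. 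Hence no $C' \prec C_\Xgood$ is pruned.

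The remainder of the completeness argument carries over unchanged from Theorem~\ref{thm:correctness-lgt}: following a sequence of non-deterministic choices with $C' \prec C'' \preceq C_\Xgood$, the search is never blocked, and since the number and length of the at-most-once-looping histories are finite, every such history of $C_\Xgood$ is eventually simulated, yielding $\lambda_\Xgoal^{(0)} = \LGT(C_\Xgood) \ge \LGTdesired$ and $\lambda_\Xgoal^{(0)} + \lambda_\Xfail^{(0)} = \LTER(C_\Xgood) \ge \LTERdesired$ by Lemmas~\ref{lemma:lambda-goal-inequality}, \ref{lemma:lambda-fail-inequality} and~\ref{lemma:lambda-lter}; at that point the strengthened success test fires and $C_\Xgood$ (or an earlier sufficient extension) is returned. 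The step I expect to be most delicate is the monotonicity claim $\tilde\lambda_\Xnoter^{(0)}(C') \le \tilde\lambda_\Xnoter^{(0)}(C_\Xgood)$: one must ensure that an undefined transition in the partial controller $C'$ does not spuriously contribute non-termination mass, so that the non-terminating histories of $C'$ genuinely inject into those of $C_\Xgood$ with equal likelihood. This is precisely what property $\ddagger$ secures, and I would lean on it rather than re-derive the correspondence of histories from scratch.
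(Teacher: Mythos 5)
Your proposal is correct and takes essentially the same approach as the paper's own proof: the same two-conjunct soundness argument via Lemmas~\ref{lemma:LGT-lambda-goal}, \ref{lemma:lambda-goal-inequality}, \ref{lemma:lambda-fail-inequality}, and~\ref{lemma:lambda-lter}, and the same completeness argument splitting rejection into the $\dagger$ and $\dagger\dagger$ cases, where your chain for the $\dagger\dagger$ case is exactly the paper's inequality (Eq.~\ref{eq:completeness-lter}) written in reverse order, followed by the same termination and non-deterministic-choice arguments inherited from Theorem~\ref{thm:correctness-lgt}. If anything, your citation of Lemma~\ref{lemma:lambda-noter-inequality} for the step $\lambda_\Xnoter^{(0)}(C') \le \tilde{\lambda}_\Xnoter^{(0)}(C')$ is more precise than the paper's, which attributes that step to Lemma~\ref{lemma:lambda-lter}.
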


\begin{proof}[Proof sketch]
\textbf{Soundness.}
A controller $C$ is returned by {\sc Pandor-synth} only if $\LGTdesired \le λ_\Xgoal^{(0)}$ and $\LTERdesired \le λ_\Xgoal^{(0)} + λ_\Xfail^{(0)}$ (line~4.\ref{algline:and-goal-condition}).
By Lemmas \ref{lemma:LGT-lambda-goal},~\ref{lemma:lambda-goal-inequality}, and \ref{lemma:lambda-fail-inequality},~\ref{lemma:lambda-lter}:
\begin{align}
    λ_\Xgoal^{(0)} &≤ \tilde{λ}_\Xgoal^{(0)} = \LGT,\\
    λ_\Xgoal^{(0)} + λ_\Xfail^{(0)} &≤ \tilde{λ}_\Xgoal^{(0)} + \tilde{λ}_\Xfail^{(0)} = \LTER,
\end{align}
making the controller $\LGTdesired ≤ \LGT$ and $\LTERdesired ≤ \LTER$ for $\Prob$.

\textbf{Completeness.}
Suppose there exists an $N\!$-bounded controller $C_{\Xgood}$ which is $\LGT ≥ \LGTdesired$ and $\LTER ≥ \LTERdesired$ for $\Prob$.

Suppose a smaller controller $C' \prec C_\Xgood$ is rejected -- this can happen either for not meeting the bound on $\LGTdesired$ (property $\dagger$) or on $\LTERdesired$ ($\dagger\dagger$).
A failing or non-terminating history of $C'$ has the same property for $C_\Xgood$ as well, and is a valid history for the system $⟨\Env, C_\Xgood⟩$ (property $\ddagger$).
A failing or non-terminating history does not terminate in a goal state (property $\star$).
If the controller is rejected for $\dagger$, then inequality~\ref{eq:completeness-lgt} holds, otherwise:
\begin{align}\label{eq:completeness-lter}
\LTER(C_\Xgood) 
  &≤ 1 - \tilde{λ}_\Xnoter^{(0)}(C_\Xgood) &\quad\text{by $\star$} \\
  &≤ 1 - \tilde{λ}_\Xnoter^{(0)}(C') &\quad\text{by $\ddagger$} \notag \\
  &≤ 1 - λ_\Xnoter^{(0)}(C') &\text{Lemma \ref{lemma:lambda-lter}} \notag \\
  &< \LTERdesired &\quad\text{by $\dagger\dagger$} \notag
\end{align}
Either Eq.~\ref{eq:completeness-lgt} or Eq.~\ref{eq:completeness-lter} is against the premise that $C_\Xgood$ is $\LGT ≥ \LGTdesired$ and $\LTER ≥ \LTERdesired$, contradiction: no smaller controller is rejected.

The algorithm terminates for the reasons described in the proof of Theorem~3.

Suppose that when the current controller is $C' \prec C_\Xgood$, at a non-deterministic choice the next controller $C''$ is such that $C' \prec C'' \not\preceq C_\Xgood$.
If this execution branch of $C''$ does not fail, then a controller was returned, which was $\LGT ≥ \LGTdesired$ and $\LTER ≥ \LTERdesired$ by the soundness of the search.

If at every non-deterministic choice, $C''$ is chosen such that $C' \prec C'' \preceq C_\Xgood$, then as $C''$ can't be rejected, either $C''$ or an extension of it will be returned.
\end{proof}


\subsection{Time and space complexity}

In order to explore the whole environment with a given controller, we need to take $O(b^{h_{\text{max}}})$ steps, where 
$b$ is the branching factor at the AND-step (the maximum number of outcomes of an action), and
$h_{\text{max}}$ is the length of the longest possible history without a repeated state (i.e. $h_{\text{max}} \leq |\States|$).
At every AND-step, \Pandor needs to calculate the $\lambda$ vectors, which takes $O(h_{\text{max}}^2)$ steps due to the size of $\alpha_{\text{loop}}$.
This exploration needs to be done, usually to different depths, for every possible non-isomorphic $N$-bounded FSC, which we denote by $\#_C$.
A controller is defined by its transitions, hence $\#_C < (N\cdot|\Observations|)^{N\cdot|\Actions|}$.
It follows that the time complexity of the algorithm is $O(b^{h_{\text{max}}} \cdot \#_C \cdot h_{\text{max}}^2) = O(b^{h_{\text{max}}} \cdot \#_C$).

These numbers are realized in an adversarial environment with extremely low probability (depending on the action/outcome selection); in most realistic situations (i.e., non-adversarial environments), failure/success would be orders of magnitude quicker.
First, whenever a controller is found to not meet the desired $\LTERPCdesired$, all of its extensions are discarded immediately, leaving us with $\#_C \ll (N\cdot|\Observations|)^{N\cdot|\Actions|}$. 
Secondly, most controllers are unable to explore the whole environment, and require orders of magnitude fewer than $b^{h_{\text{max}}}$ steps. We believe the search process could be further improved using heuristics.

Analogously, we need to store the alpha vectors and the $\alpha_{\text{loop}}$ matrix at each controller extension, which each require $O(h_{\text{max}}^2)$ space.
If the maximum number of controller transitions is $\#_T$ (where $\#_T \leq N \cdot |\Observations|$), then this results in a space complexity of $O(h_{\text{max}}^2 \cdot \#_T)$.

\section{Related Work}

Our results are related to a number of recent approaches on bounded search and FSC synthesis, but as we discuss below, the nature of our results and the thrust of our proof strategy is significantly different from these approaches. At the outset,  our contributions should be seen as a full generalization of \cite{Hu2013-and-or} to stochastic domains, in that it provides a generic technique for FSC synthesis in a whole range of planning frameworks (cf.  \cite{Hu2013-and-or}) that can now be considered with probabilistic nondeterminism. 

The work of \cite{Hu2013-and-or} is positioned in the area of generalized planning. We will briefly touch on approaches to generating loopy plans, and then discuss related correctness concerns.

Early approaches to loopy plans can be seen as deductive methodologies, often  influenced by program synthesis and correctness  \cite{DBLP:journals/cacm/Hoare69}.  Manna and Waldinger \cite{DBLP:journals/toplas/MannaW80}  obtained recursive plans by matching induction rules, and  \cite{DBLP:conf/aips/StephanB96}  refine generic plan specifications, but required  input from humans. See  \cite{DBLP:conf/kr/MagnussonD08} for a recent approach using  induction.

Most recent proposals differ considerably from this early work using deduction: \begin{itemize}
	\item \cite{levesque2005-kplanner}  expects two parameters with the planning problem; the approach plans for the first parameter, winds it to form loops and tests it for the second. 
	\item \cite{winner07_loop} synthesize a plan sequence with partial orderings, and exploit repeated occurrences of subplans to obtain loops. 
	\item \cite{Srivastava2010-Foundations-thesis} considers an abstract state representation that groups objects into equivalences classes, the idea being that any concrete plan can be abstracted wrt these classes and repeated occurrences of subplans can be leveraged to generate compact loopy plans.  
	\item \cite{Bonet2009-Automatic-derivation}  integrate the dynamics of a memoryless plan with a planning problem, and convert that to a conformant planning problem; the solution to this latter problem is shown to generalize to multiple instances of the original problem. 
	\item \cite{Hu2013-and-or} propose a bounded AND/OR search procedure that is able to synthesize loopy plans, which is what we build on. 
\end{itemize}

On the matter of correctness,  \cite{DBLP:conf/aaai/Levesque96}  argued that generalized plans be tested for termination and correctness against all problem instances; \cite{lin1998robots}  extended this account to define goal achievability. In later work, \cite{Cimatti2003-model-checking} defined the notions of weak, strong and strong cyclic solutions in the presence of nondeterminism.\footnote{Variant additional stipulations in the literature include things like     \emph{fairness}, where every outcome of a nondeterministic action must occur infinitely often \cite{DBLP:conf/ijcai/BonetG15}.} These notions are widely used in the planning community \cite{DBLP:conf/ijcai/BonetG15}; see, for example,  \cite{Bertoli2006337} for an account of strong planning with a sensor model. Recently,   \cite{Srivastava2015-Tractability}  synthesize loopy plans in domains with nondeterministic quantitative effects,  for which strong cyclic solutions are studied.  Our account of correctness is based on \cite{belle2016-generalized-planning}, which generalized Levesque's account \cite{DBLP:conf/aaai/Levesque96}.

Synthesizing FSCs is a very active area of research within Markov Decision Processes (MDPs) and partially observable MDPs (POMDPs) \cite{Meuleau1999-pomdps-finite,Poupart2003-Bounded,Amato2007-quadratically-constrained}.
But the majority of algorithms in this space either solve  an approximation of the problem, or they come without correctness guarantees. 
In contrast, emphasizing correctness, \cite{junges2018finite} show how FSC synthesis for POMDPs can be reduced to parameterized Markov chains, {under the requirement of \textit{almost-sure plans that do not enter bad states.}} Similarly, \cite{chatterjee2016symbolic} propose the synthesis of \textit{almost-sure plans} by means of a SAT-based oracle. Not only are the algorithms significantly different from our own, but the correctness specification too is formulated differently. Thus, our contributions are complementary to this major body of work, and orthogonal to a large extent; for the future, it would be interesting to relate these strategies more closely. 

When it comes to similarity to our algorithms, there are a variety of approaches based on bounded AND-OR search, such as AO$^*$, LAO$^*$, and LRTDP (e.g., \cite{bonet2003labeled}). Here too, the specification criteria, the correctness bound and the nature of the analysis are largely orthogonal to ours. For example, LAO$^*$ allows  for loops, but the solution is not necessarily a sound and complete N-bounded FSC: it yields a partial policy and does  not allow for arbitrary likelihood scenarios (e.g., an anytime bound such as the goal of generating a FSC where >20\% of the paths reach the goal state). Perhaps one could think of  the difference between LAO$^*$ and LRTDP vs. \textsc{Pandor} as being analogous to the difference between the  paradigms of dynamic programming vs. Monte Carlo methods. In fact, we think our proposal is suitable to combine the best of the two worlds, but that is a topic for future research. We are also excited about the prospect of extending our contributions to the continuous case, and potentially providing asymptotic guarantees for provably correct FSC synthesis.

\section{Conclusions and Discussion} %
\label{sec:discussion_and_conclusions}

In this paper, we presented new theoretical results on a generic technique for synthesizing FSCs in stochastic environments, allowing for highly granular specifications on termination and goal satisfaction. We then proved the soundness and completeness  of that synthesis algorithm. 

As discussed above, the contributions of this paper are solely on the theoretical front.  Nonetheless, %
we will  release  a proof-of-concept implementation of the pseudocode in Alg. \ref{alg:pandor} and~\ref{alg:pandor-helper}.\footnote{The implementation can be accessed at \tt https://github.com/treszkai/pandor} In our preliminary evaluations, we observed that in deterministic domains, our planner has the same runtime as the planner of \cite{Hu2013-and-or}, and the difference is only a small linear factor for additional bookkeeping. But suppose we were to consider %
a noisy variant of the Hall-A domain in Fig.~\ref{fig:hall-a-one}, where every action has a 50-50\% probability of either succeeding or leaving the current state unchanged. Here,  because moving left in cell B is not guaranteed to succeed, we can see that an extra transition is required to move the agent out of B in case the first attempt fails ($\smash{q_1\xrightarrow{\raisebox{-0.5ex}[1ex][0pt]{$\scriptstyle B\ :\ \leftarrow$}}q_0}$). And indeed, the FSC synthesized by {\sc Pandor} contains this transition: \smallskip 

\centerline{\tikz[trim left=0mm,trim right=45mm]{\footnotesize



\node[state, initial] at (0mm,0mm) (q0) {$q_0$};
\node[state] at (45mm,0mm) (q1) {$q_1$};

\draw[->] (q0) edge[loop above] node[align=center,xshift=1mm] {$A :\ \rightarrow$\\$- :\  \rightarrow$} (q0);
\draw[->] (q0) edge[bend left=8] node[above] {$B :\ \leftarrow$} (q1);
\draw[->] (q1) edge[loop above] node {$- :\ \leftarrow$} (q1);
\draw[->] (q1) edge[bend left=8] node[below,align=center] {$A : \Xstop$\\ $B: \ \leftarrow$} (q0);
}} 
\smallskip

There are many interesting directions for the future. For example, 
one could investigate: \emph{(a)} effective sampling strategies; \emph{(b)} the tradeoff between higher  \textbf{LGT} bounds vs scalability (i.e., demands on  \textbf{LGT} bounds may be different across applications);  and \emph{(c)} the merits and demerits of the various correctness criteria from the literature for safety-critical applications. To that end,  along with recent advances in the area, we hope that our results provide  theoretical foundations, new proof strategies and a fresh perspective on FSC synthesis in stochastic domains. \smallskip 

%
%

%

%
%

%

%
%
%

%
%
%

%

%

%
%

\end{document}